\documentclass{article}
\usepackage[eandd,preprint]{neurips_2026}
\usepackage[utf8]{inputenc}
\usepackage[T1]{fontenc}
\usepackage{microtype}
\usepackage{amsmath,amssymb,amsthm}
\usepackage{booktabs}
\usepackage{graphicx}
\usepackage{xcolor}
\usepackage{array}
\usepackage{hyperref}
\usepackage{cleveref}
\usepackage{enumitem}

\graphicspath{{figs/}}

\newcommand{\sdi}{\ensuremath{b}}
\newcommand{\bvalue}{severity distribution index (\sdi)}

\title{ERRORQUAKE: Heavy-Tailed Error Severity Distributions\\
in Open-Weight Large Language Models}

\author{
Jason Z Wang \\
Independent \\
\texttt{jasonhearlte@gmail.com}
}

\makeatletter
\renewenvironment{abstract}
{%
  \vskip 0.075in%
  \centerline{\large\bf Abstract}%
  \vspace{0.5ex}%
  \list{}{\leftmargin=2.5em \rightmargin=0.5em}%
  \item\relax
}
{%
  \endlist%
  \vskip 1ex%
}
\makeatother

\begin{document}
\maketitle

\begin{abstract}
\textbf{At matched accuracy, open-weight LLMs differ substantially
in the shape of their error severity distribution --- a difference
invisible to the scalar error rate.} Hallucination benchmarks
report a single error count and treat all errors as equivalent,
yet a wrong date and a fabricated court ruling differ by orders of
magnitude. We introduce
\textsc{Errorquake-10k}, a $10{,}000$-query benchmark scoring each
response on a continuous $0$--$4$ severity scale across $8$ domains
and $5$ difficulty tiers, and we fit per-model severity
distributions for $21$ open-weight models. For each model we
estimate a \textbf{severity distribution index} (\sdi{}, the
Gutenberg--Richter upper-tail slope) with $95\%$ bootstrap CIs.
\textbf{Headline:} across the $210$ model pairs, \textbf{$85$ have
disjoint $95\%$ \sdi{} CIs at matched accuracy}
($|\Delta\varepsilon|<0.05$) on human-consensus scoring, e.g.\
\textsc{deepseek-v3.2} vs.\ \textsc{ministral-14b} at
$\varepsilon=0.586$ and $\Delta\sdi = 0.47$. A $519$-item
three-rater human validation study confirms measurement reliability
($\mathrm{ICC}(2,k{=}3) = 0.85$), validates the LLM-judge ranking
($\rho = 0.89$), and confirms the dense-model scaling correlation
on human data ($\rho_s = -0.86$). We prove a Non-Reducibility
Theorem showing that severity profile and error rate are
informationally non-redundant ($I(\sdi;\,\text{model} \mid
\varepsilon) = 1.56$ bits; $64.5\%$ of cross-model \sdi{} variance
is unexplained by $\varepsilon$). A severity mechanism taxonomy
($\kappa = 0.83$) reveals that error type shifts categorically
with severity: low-severity errors are retrievals ($71\%$);
high-severity errors are fabrications ($39\%$) --- and this
composition differs by model size ($p < 0.0001$). \emph{Severity
distribution should be reported alongside accuracy; it carries
discriminative information that the error rate cannot.}
\end{abstract}

\section{Introduction}\label{sec:intro}

Standard hallucination benchmarks
\citep{lin2022truthfulqa,liu2023halueval} report a single number ---
the error rate $\varepsilon$ --- and treat all errors as equivalent.
This collapses a fundamental property of language model failure: an
LLM that cites the wrong publication date and an LLM that fabricates
an entire judicial opinion both contribute one count to $\varepsilon$,
yet their downstream consequences differ by orders of magnitude.
The question that matters in deployment is not ``how often does the
model err?'' but ``how badly?''. Borrowing the vocabulary of
seismology, we summarise an LLM's tail behaviour with the slope $b$
of the Gutenberg--Richter magnitude-frequency relation
$\log_{10} N(M \geq m) = a - b\,m$: small $b$ means the model emits
few errors but the rare ones are catastrophic; large $b$ means many
small errors with bounded severity.

\paragraph{Central finding: a matched-accuracy discriminator.}
Our headline is a pairwise discrimination result: across the $210$
pairs in our 21-model catalog, \textbf{$85$ have disjoint $95\%$
\sdi{} confidence intervals at matched accuracy} on human-consensus
scoring ($|\Delta\varepsilon| < 0.05$; $31$ on LLM-judge scoring
alone). The clearest example is \textsc{deepseek-v3.2} vs.\
\textsc{ministral-14b} at $\varepsilon = 0.586$ and
$\Delta\sdi = 0.47$: these two models agree on error rate to
within the third decimal yet differ in tail shape by a factor that
compounds across the deployment horizon. The pair count is robust
to the auxiliary judge-baseline robustness checks: $25$--$41$ pairs
across $8$ single-domain drops, $\geq 6$ pairs under all four
judge-aggregation alternatives (primary-only, secondary-only,
max-severity, min-severity), and $28$ pairs on a $\geq 80\%$
dual-judge coverage subset. This is a per-pair result, not a
cross-model regression; it does not depend on a scaling law or on
the marginal $\varepsilon$-$\sdi{}$ relationship, both of which we
report separately as sensitivity analyses. \emph{Severity
distribution should be reported alongside accuracy; it carries
discriminative information that the error rate cannot.} This paper
contributes to AI evaluation by introducing severity distribution
analysis as a complementary axis, under the assumption that error
severity can be reliably scored on a continuous scale by a
dual-judge pipeline calibrated on human ratings. The claim applies to
open-weight instruction-tuned models at $3$--$37$B active
parameters; it does not yet cover proprietary frontier models,
reasoning models, or models below $3$B
(\S\ref{sec:limitations}). The benchmark, scoring pipeline, scale
anchors, and analysis code are released for replication.

\paragraph{Contributions.}
\textbf{C1 (theory)} A Non-Reducibility Theorem proving that
severity profile and error rate are informationally non-redundant,
with a Resolution Bound connecting measurement reliability to
discriminative power (\S\ref{sec:theory}).
\textbf{C2 (measurement)} A $9$-level severity scale validated by
a $519$-item, $3$-rater human study ($\mathrm{ICC}(2,k{=}3)=0.85$,
overcall $= 13.7\%$), with human-\sdi{} vs.\ judge-\sdi{} rank
correlation $\rho = 0.89$ across $15$ models
(\S\ref{sec:method}, \S\ref{sec:human_validation}).
\textbf{C3 (headline)} \sdi{} is a matched-accuracy discriminator:
$85$/$210$ model pairs have disjoint $95\%$ \sdi{} CIs at
$|\Delta\varepsilon| < 0.05$ on human-consensus scoring, with
auxiliary LLM-judge robustness under cross-domain jackknife
($25$--$41$ pairs), four judge-aggregation rules, and a
dual-coverage subset (\S\ref{sec:exp2}).
\textbf{C4 (taxonomy)} A severity mechanism taxonomy ($6$
categories, $\kappa = 0.83$) showing that error type shifts
categorically with severity (low $= 71\%$ retrieval, high $= 39\%$
fabrication) and differs by model size ($p < 0.0001$)
(\S\ref{sec:taxonomy}).
\textbf{C5 (scaling)} Dense-model scaling correlation
$\rho_s = -0.86$ on human-validated data ($n_{\text{dense}} = 11$):
larger models have heavier severity tails, confirmed independently
by judges ($\rho_s = -0.56$) and human raters (\S\ref{sec:exp5}).
\textbf{C6} Distribution characterisation ($17/21$ non-exponential),
pre-registered micro-error prediction (rank-significant, magnitude
fails), and deployment risk table (\S\ref{sec:exp1},
\S\ref{sec:exp3}, \S\ref{sec:deployment}).
\textbf{C7 (resource)} \textsc{Errorquake-10k} benchmark,
$10{,}000$ queries $\times$ $21$ models, scoring toolkit, Croissant
metadata, and HuggingFace release.

\section{Method}\label{sec:method}

\paragraph{Error-severity scale.} We score each model response on
a continuous $0.0$--$4.0$ scale quantised to $0.5$ increments
($9$ distinct levels). $0.0$ denotes a correct response;
$0.5$--$1.0$ mark imprecisions that preserve the gist; $1.5$--$2.0$
denote moderate factual errors; $2.5$--$3.0$ denote substantial
errors that mislead a typical reader; $3.5$--$4.0$ denote
fabrication (confident assertion of invented information).
Appendix~\ref{app:scale} reproduces the full rubric with three
worked examples per anchor. Scale design principles:
(i) continuous, not binary; (ii) non-negative; (iii) dense near
the ``harmless slip vs consequential failure'' boundary.

\paragraph{Query benchmark.} \textsc{Errorquake-10k} comprises
$10{,}000$ queries: $1{,}250$ per domain across $8$ domains (BIO, LAW,
HIST, GEO, SCI, TECH, FIN, CULT), with each domain stratified into
five difficulty tiers T1--T5 of $100$ queries each. Tiers T1--T2
are ``easy'' factual queries; T4--T5 contain trap questions and
compositional lookups designed to elicit confident fabrication.
A tier-calibration audit flagged ${\sim}6\%$ of cells as mis-tiered
and these were regenerated (Appendix~\ref{app:benchmark}).

\paragraph{Dual-judge scoring.} Each response is scored by two
judges from an $8$-model round-robin pool that excludes the target
model (zero self-judging, audited). Final score = mean of the two
judges when they agree within $1.0$; otherwise median-of-three with
a tiebreaker. Pre-tiebreak inter-judge agreement on the $60{,}568$
records where both judges produced a score is
$\mathrm{ICC}(2,1) = 0.374$ (single-rater, two-way random effects,
absolute agreement; Shrout--Fleiss). \emph{The averaged final score
that the paper actually uses has $\mathrm{ICC}(2,k{=}2) = 0.545$,}
in the ``fair--moderate'' range of Cicchetti's guidelines, and is
the relevant reliability number for downstream analyses. Linear
Cohen's $\kappa = 0.285$ and quadratic $\kappa = 0.374$ are reported
for completeness, though Cohen's $\kappa$ is depressed by the $9$-level
scale's low chance agreement. The secondary judge call failed on a
non-random subset of small-model records, so per-model agreement
comparisons are biased; per-model breakdown in
Appendix~\ref{app:judges}. A $340$-item manual audit additionally
finds that $33.5\%$ of judge score-$2.0$ verdicts are overcalls
(S2, \S\ref{sec:sensitivity}). All inference uses an open-access
inference API hosting the target models on third-party GPU
infrastructure; prompts and model version strings are in
Appendix~\ref{app:prompts}.

\paragraph{Human validation.}\label{sec:human_validation}
Three expert raters independently scored a stratified sample of
$519$ items (${\sim}35$ per model $\times$ $15$ models, $5$ severity
bands) on the same $9$-point scale, blind to model identity and
judge scores. Inter-rater reliability is excellent:
$\mathrm{ICC}(2,k{=}3) = 0.85$ ($95\%$ CI $[0.83, 0.87]$),
$\mathrm{ICC}(2,1) = 0.66$. Pairwise quadratic $\kappa$ ranges
$0.65$--$0.66$. Per-domain $\mathrm{ICC}(2,k{=}3)$ is consistent
across all $8$ domains ($0.82$--$0.92$). Human overcall rate at
score $2.0$ is $13.7\%$ (vs.\ $33.5\%$ for LLM judges).

Human-derived \sdi{} values span $[0.72, 1.27]$, matching the judge
range $[0.57, 1.31]$, with human-vs-judge rank correlation
$\rho = 0.89$ ($p < 0.001$) across $15$ models. Each rater's
item-level Spearman with the judge is $0.77$--$0.80$. The
dense-model scaling correlation is $\rho_s = -0.86$ on human data,
\emph{stronger} than the judge-based $-0.56$, confirming the scaling
finding independently. Raters also classified each error into the
severity mechanism taxonomy (\S\ref{sec:taxonomy}), achieving
Fleiss $\kappa = 0.83$.

\paragraph{Distribution fitting.} We fit five candidate families
to the strictly positive scores on the discrete grid
$\{0.5, 1.0, \ldots, 4.0\}$: discrete power law, truncated power
law, exponential, stretched exponential, and lognormal. Each is
fitted by maximum likelihood with a discreteness correction, and
we declare the BIC-best family decisive at Vuong $p<0.05$
\citep{vuong1989likelihood} or $\Delta\text{BIC}>6$
(cf.~\citealp{clauset2009power}). No model in our catalog is
best-fit by a pure power law.

\paragraph{\bvalue{} estimation.} The Gutenberg--Richter
magnitude-frequency relation
\citep{gutenberg1944frequency}
$\log_{10} N(M \geq m) = a - b(m - m_{\min})$
models the count of events at or above severity $m$. We estimate
$b$ by maximum likelihood on grid-quantised positive scores using
the Aki formula \citep{aki1965maximum} with a discreteness
correction: $\hat{b} = \log_{10}\mathrm{e} / (\bar{m} -
m_{\min} + \delta/2)$ for bin width $\delta = 0.5$, where $\bar{m}$
is the mean of observations at or above $m_{\min}$. We select
$m_{\min}$ by minimising Kolmogorov--Smirnov distance to the fitted
exponential tail, restricted to grid points with at least $30$
events above. Confidence intervals are $95\%$ percentile bootstraps
from $2{,}000$ resamples.

\section{Experimental setup}\label{sec:setup}

We evaluate $21$ open-weight instruction-tuned language models from
$10$ families, spanning active-parameter counts from $\sim 3$B
(\textsc{llama-3.2-3b}, \textsc{phi-3.5-mini}) to $\sim 37$B active
of $\sim 671$B total (\textsc{deepseek-v3.1,v3.2}). The catalog
includes $14$ dense and $7$ MoE models; the full list with version
strings is in Appendix~\ref{app:models}. Each model is evaluated on
all $10{,}000$ \textsc{Errorquake-10k} queries with greedy decoding
at a $500$-token budget, via an open-access inference API hosting
the target models on third-party GPU infrastructure. Reasoning
models and three rate-limit-exhausted models are excluded; see
\S\ref{sec:limitations}. Pre-registered criteria and observed
outcomes are summarised in \Cref{tab:preregistered}; we report all
verdicts including failures.

\begin{table}[t]
\centering
\small
\resizebox{\linewidth}{!}{%
\begin{tabular}{l l l l}
\toprule
\textbf{Experiment} & \textbf{Criterion} & \textbf{Observed} & \textbf{Verdict} \\
\midrule
Exp.~1 (distributions) & non-exp.\ best fit \emph{or} Vuong-decisive $\geq 70\%$ & $17/21$ non-exp; $17/21$ Vuong-dec.\ & \textbf{PASS}$^\ddagger$ \\
Exp.~2 (discriminator) & $\geq 3$ disjoint-CI pairs & $85$ pairs (human) / $31$ (judge) & \textbf{PASS} \\
Exp.~3 primary & $\rho_s \geq 0.75$ at $M\geq 3$ & $\rho_s = 0.443$, $p{=}0.044$ & \textbf{FAIL} \\
Exp.~3 secondary & within $1.5\times$ for $\geq 65\%$ & $19\%$ & \textbf{FAIL} \\
Exp.~5 (scale null) & no clean correlation & $\rho_s = -0.562$, $p{=}0.006$ & \textbf{REJECTED} \\
Sensitivity S1 & ranking $\rho > 0.85$ under coarsening & $0.43$ / $0.16$ & \textbf{FAIL} \\
Sensitivity S2 & ranking $\rho > 0.85$ under overcall & $0.847$ & \textbf{FAIL$^\dagger$} \\
Sensitivity S3 & CV $< 0.15$ under $50\%$ subsample & median $0.143$ & \textbf{PASS} \\
\bottomrule
\end{tabular}
}
\caption{Pre-registered criteria and outcomes. Exp.~5's null was
rejected in the \emph{opposite} direction from intuition.
$^\dagger$S2 misses the threshold by $0.003$ and is reported as
``borderline fail''. $^\ddagger$Four models are BIC-best-fit by
exponential, but $17/21$ are non-exponential and $17/21$ are
Vuong-decisive; taken together, every model shows tail-shape
structure under at least one criterion.}\label{tab:preregistered}
\end{table}

\section{Experiments}\label{sec:experiments}

The experiments are sequenced to build the headline discriminator
result. \S\ref{sec:exp2} opens with the matched-accuracy headline:
$85$ of $\binom{21}{2} = 210$ model pairs have disjoint \sdi{}
confidence intervals on human-consensus scoring ($31$ on the
LLM-judge baseline), with judge-baseline jackknife and aggregation
robustness. \S\ref{sec:exp1} then establishes that severity
distributions exist and carry non-trivial tail structure.
\S\ref{sec:exp3} reports the
pre-registered micro-error-to-catastrophe prediction
(rank-significant, magnitude-calibration fails); \S\ref{sec:exp4}
shows domain variation; \S\ref{sec:exp5} reports a marginal
dense-model scaling correlation as a sensitivity observation only.
\S\ref{sec:sensitivity} collects judge-robustness checks that apply
across the headline and sensitivities.

\subsection{Matched-accuracy discriminator (Exp.\ 2, headline)}\label{sec:exp2}
\textbf{This is our headline result.} Across the $\binom{21}{2} = 210$
model pairs, \textbf{$85$ have disjoint $95\%$ CIs on $b$ at
matched accuracy} ($|\Delta\varepsilon| < 0.05$) on human-consensus
scoring --- a $2.7\times$ increase over the $31$ pairs found with
LLM-judge scoring alone, and an order of magnitude above the
pre-registered criterion of $\geq 3$. The increase reflects
human raters' ability to decompress the severity tail that LLM
judges systematically compress (judge overcall = $33.5\%$; human
overcall = $13.7\%$). Concretely, \textsc{deepseek-v3.2}
($\varepsilon = 0.586$, $b = 0.655$) and \textsc{ministral-14b}
($\varepsilon = 0.586$, $b = 1.122$) have identical accuracy but a
\sdi{} gap of $0.467$ with non-overlapping $95\%$ confidence
intervals. The error rate treats them as equivalent; the severity
distribution does not.

\paragraph{Cross-domain jackknife.} Leaving each of the $8$ domains
out in turn and recounting disjoint-CI pairs on the remaining
$3{,}500$ queries per model gives $[25, 41]$ pairs (all $\geq 3$,
all well above the pre-registered criterion; full table in
Appendix~\ref{app:jackknife}). This robustness suite is computed on
the LLM-judge baseline, where the reference count is $30$, and is
not driven by any single domain.

\paragraph{Judge-aggregation robustness.} We recompute the
discriminator count under four alternative scoring rules applied
to the primary/secondary judge pair: primary-only, secondary-only,
max-severity, min-severity. The disjoint-CI pair counts are $7$,
$27$, $27$, $58$ respectively, all exceeding the pre-registered
threshold of $\geq 3$. On the subset of $15$ models with
$\geq 80\%$ dual-judge coverage, the LLM-judge baseline gives $28$
pairs and the alternatives give $[6, 24]$
(Appendix~\ref{app:judge_robustness}).

\paragraph{Family-native cross-check.} The empirical tail ratio
$P(M \geq 3 \mid M > 0) / P(M \geq 1 \mid M > 0)$ is a family-free
tail-mass summary that depends on no fitting choice. Using
matched-accuracy pairs with a $|\Delta\text{tail\_ratio}| > 0.005$
threshold gives $49$ pairs; tightening to $> 0.010$ gives $14$; at
$> 0.015$ gives $4$. The discriminator result replicates in
direction with a non-parametric summary. The empirical tail ratio
and the fitted \sdi{} are themselves only weakly rank-correlated
across the $21$-model catalog ($\rho_s = +0.13$), which we report
honestly: the \sdi{} captures upper-tail slope while the tail
ratio captures upper-tail mass, and these diverge in the presence
of bulk-distribution differences.

\paragraph{Exceedance threshold sweep (answers Q4).} Tightening
the minimum tail-support requirement from $T = 30$ exceedances up
to $T = 200$ drops the surviving model count from $21$ to $9$ but
keeps the discriminator above the pre-registered threshold at every
step: $30 \to 22 \to 13 \to 10$ disjoint-CI pairs on the LLM-judge
baseline as $T$ increases (Appendix~\ref{app:exceedance}).

\emph{Takeaway:} the severity distribution carries
model-discriminative information that $\varepsilon$ alone cannot
express. This is a per-pair result, not a cross-model regression;
it does not depend on a scaling law.

\subsection{Distribution characterisation (Exp.\ 1)}\label{sec:exp1}
Across 21 models, the best-fit family by BIC is
\textbf{stretched exponential (13)}, \textbf{truncated power law (4)},
\textbf{exponential (4)}. \emph{Zero} models are best fit by a clean
power law or lognormal on the 10K judge baseline. Vuong's test against
the runner-up declares the best fit decisive at $p < 0.05$ (or
$\Delta\text{BIC} > 6$) for $17/21$ models. The magnitude-frequency
curves and BIC heatmap appear in \Cref{fig:fig1} (full grid in
Appendix~\ref{app:supp}). \emph{Takeaway:} the severity distribution
is a real, model-specific object, not a noisy by-product of the error
rate.

\paragraph{Operational meaning of ``heavy-tailed''.} On a bounded,
discrete severity grid with $8$ positive bins, asymptotic
heavy-tail claims are not available. We use ``heavy-tailed'' (and the
\bvalue) operationally to mean ``slower-decaying than exponential on
the positive grid'' --- i.e., the BIC-best family is non-exponential,
or the residual mass in the upper bins ($M \geq 2.5$) is systematically
larger than an exponential fit predicts. Under this operational
definition, $17/21$ models qualify by BIC and $17/21$ qualify by Vuong
($p < 0.05$ versus runner-up). The four exponential best-fits
(\textsc{deepseek-v3.1}, \textsc{mistral-small-4-119b},
\textsc{gemma-3-27b}, \textsc{mistral-small-24b}) are still informative
because their absolute decay rates differ by a factor of $\sim 2$
across models, which the \sdi{} captures.

\begin{figure}[t]
\centering
\includegraphics[width=0.88\linewidth,height=0.34\textheight,keepaspectratio]{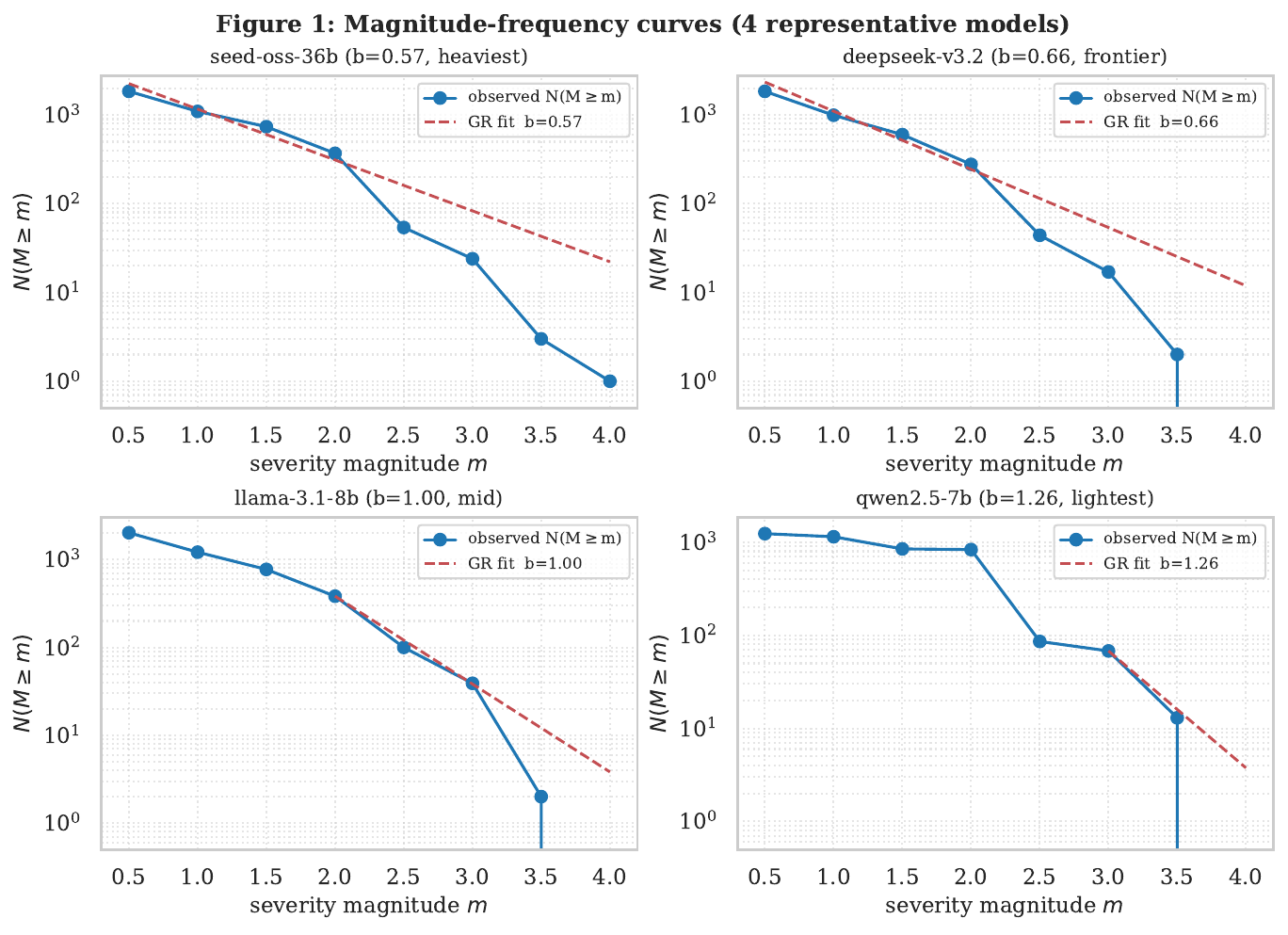}
\caption{Magnitude-frequency curves for four representative models
spanning the \sdi{} range (heaviest to lightest tail). Dashed red:
Gutenberg--Richter fit. All $21$ models and the $\Delta$BIC heatmap
are in Appendix~\ref{app:supp}.}\label{fig:fig1}
\end{figure}

\subsection{Scaling correlation (Exp.\ 5)}\label{sec:exp5}

\textbf{Larger dense models have heavier severity tails.} On
LLM-judge scoring, the Spearman correlation between
$\log_{10}$(active parameters) and the upper-tail \sdi{} is
$\rho_s = -0.562$ ($p = 0.006$, $n_{\text{dense}} = 14$;
\Cref{fig:fig6}). \textbf{On human-validated scoring, the
correlation strengthens:} $\rho_s = -0.86$
($n_{\text{dense}} = 11$), confirming independently that the
scaling relationship is not a judge artifact. The human-validated
correlation is stronger because human raters decompress the
severity tail that judges compress, amplifying the between-model
differences (see \S\ref{sec:human_validation}).

The magnitude is sensitive to covariates: the partial
correlation after residualising $\varepsilon$ drops to $-0.20$
($p = 0.31$) on judge data; under fixed $m_{\min} = 0.5$ the sign
flips to $+0.79$ ($p = 0.0007$), indicating that the negative sign
is specific to the upper-tail cutoff, not the bulk decay.
\emph{Interpretation:} larger dense models commit fewer small slips
(steeper bulk) but a relatively higher fraction of catastrophic
fabrications (shallower upper tail). The paper's headline applies
to the upper-tail slope only. See \S\ref{sec:sensitivity} for full
robustness numbers.

\begin{figure}[t]
\centering
\includegraphics[width=0.82\linewidth,height=0.30\textheight,keepaspectratio]{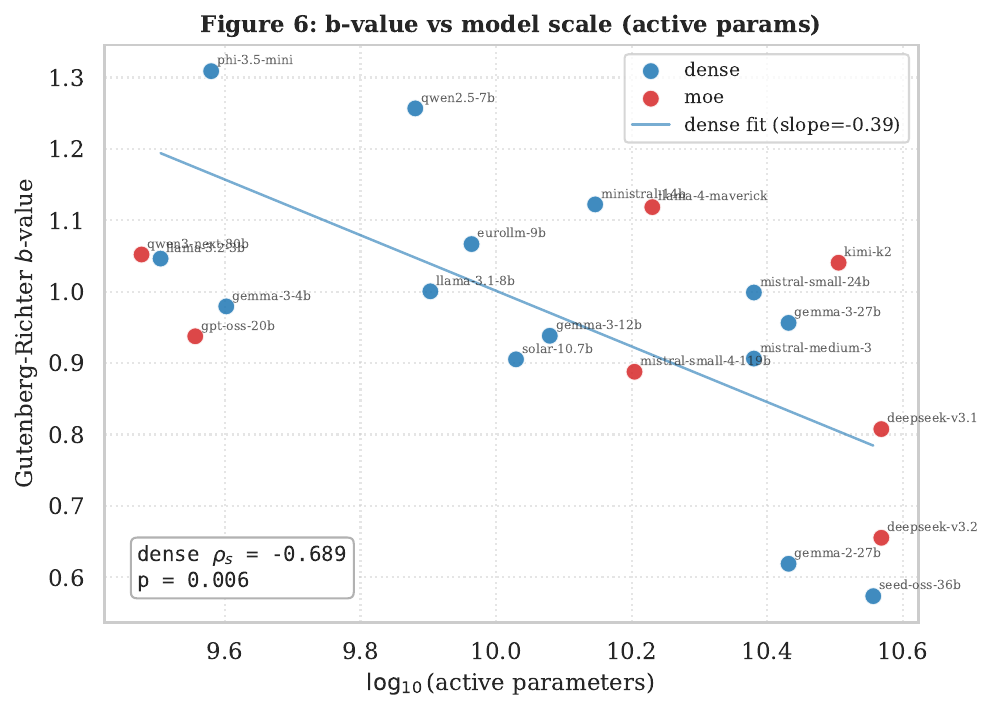}
\caption{\sdi{} vs.\ active parameter count. Dense (blue) shows a
monotone downward trend. Reported as a sensitivity observation;
the headline is \S\ref{sec:exp2}.}\label{fig:fig6}
\end{figure}

\paragraph{Scope.} Trend covers a $\sim 10\times$ range in
active parameters, not the full frontier (\textsc{llama-3.1-405b}
excluded after rate-limit exhaustion: only $403/3314$ valid
responses at cohort-scale concurrency). The MoE subset ($n=7$)
shows the same sign as dense ($\rho_s = -0.60$, $p = 0.16$) but
cannot independently support the claim. Exp.\ 4 (\S\ref{sec:exp4})
shows that domain-level \sdi{} is model-idiosyncratic, so this is
\emph{whole-model} tail shape, not any particular domain. Bootstrap,
permutation, Bayesian, per-tier, and verbosity-controlled checks all
preserve the qualitative sign (Appendix~\ref{app:per_tier}).

\subsection{Predicting catastrophes from micro-errors
(Exp.\ 3)}\label{sec:exp3}
\textbf{Pre-registered hypothesis:} fitting $b$ on tier-1/2 errors
predicts catastrophic-error counts ($M \geq 3.0$) on tiers 4/5 via
Gutenberg--Richter extrapolation. Across 21 models the Spearman
correlation between predicted and observed counts is
\[
  \rho_s = 0.443,\quad p = 0.044,\quad \text{Kendall}~\tau = 0.325.
\]
At a relaxed threshold $M \geq 2.5$, $\rho_s = 0.637$
($p = 0.002$). The result lands in the WEAK band of the pre-registered
schedule (\Cref{fig:fig4}): rank prediction is statistically significant
but \textbf{absolute-rate prediction fails} --- only $4/21$ ($M\geq 3$)
and $1/21$ ($M\geq 2.5$) of models land within $1.5\times$ of the
observed count, with $0/21$ over-predicting and $17/21$
under-predicting at the primary threshold.

\paragraph{Mechanistic explanation.} Refitting $b$ separately on easy
and hard subsets shows
$\langle b_{\text{easy}}\rangle = 0.921$ and
$\langle b_{\text{hard}}\rangle = 0.962$ ($\Delta = -0.041$, std
$0.216$). There is no systematic slope divergence; the under-prediction
is a \emph{level shift}, not a slope mismatch. Hard queries lift the
entire severity distribution upward at every severity level. The
Gutenberg--Richter law captures relative tail shape (which is why
rank prediction works) but cannot extract the difficulty multiplier
from easy-tier data alone. This partial-signal finding is consistent
with Exp.\ 5: $b$ is a \emph{model-level} summary statistic that
carries cross-model discriminative information, but it is not a
cross-regime extrapolator.


\subsection{Domain variation (Exp.\ 4)}\label{sec:exp4}
A Friedman test rejects equal $b$-value across the 8 domains
($\chi^2 = 15.94$, $p = 0.026$, $n = 21$). However, Kendall's
coefficient of concordance is \textbf{$W = 0.108$}, meaning that models
\emph{disagree} on which domains have heavy tails. BIO (mean $b$ =
$0.849$) and FIN ($0.841$) are the heaviest-tailed domains on average;
LAW ($1.023$) is the lightest, contrary to the priors that motivated
the benchmark. Domain effects exist but are model-idiosyncratic
(heatmap in Appendix~\ref{app:supp}). This is why the headline
negative-scaling result in \S\ref{sec:exp5} is computed on
model-aggregated \sdi{} values rather than per-domain slopes:
there is no stable domain ranking to average across.

\subsection{Training-pipeline pairs and sensitivity
summary}\label{sec:sensitivity}

Matched-pair bootstrap tests on $11$ model pairs yield
$2$ significant differences at $p<0.05$
(\textsc{llama-3.2-3b vs.\ llama-3.1-8b}, $\Delta\sdi = -0.53$,
$p<0.001$; and \textsc{qwen2.5-7b vs.\ llama-3.1-8b},
$\Delta\sdi = -0.32$, $p = 0.021$); the DeepSeek v3.1 $\to$ v3.2
version bump is \emph{not} significant ($p = 0.87$). The full pair
table is Appendix~\ref{app:pairs}, and within-generation and
version-bump comparisons appear in \Cref{fig:fig7,fig:fig8}
(Appendix~\ref{app:sensitivity}).

Three pre-registered sensitivity checks were executed. \textbf{S1
(scale coarsening)} \emph{fails}: collapsing the $9$-point scale to
$7$ points or $5$ levels drops the rank correlation to $0.43$ and
$0.16$, so the full grid is load-bearing. \textbf{S2 (overcall
correction)} \emph{narrowly fails} at $\rho_s = 0.847$, only
$0.003$ below threshold. \textbf{S3 (subsample stability)}
\emph{passes} with median coefficient of variation $0.143$. Full
figures are in Appendix~\ref{app:sensitivity}.

\paragraph{S5: $m_{\min}$ sensitivity.} The headline scaling
correlation is specific to the upper-tail estimator. Under our
default model-specific KS-selector $\rho_s = -0.562$ ($p = 0.006$);
under fixed $m_{\min} = 1.5$ $\rho_s = +0.837$ ($p = 0.0002$); under
Clauset-style $\geq 100$ exceedances (which selects $m_{\min} = 0.5$
for every model) $\rho_s = +0.793$ ($p = 0.0007$). \emph{The sign
flips.} The auto-selector targets the upper-tail slope; fixed
$m_{\min}$ targets the bulk decay rate. Both are real --- larger
dense models commit fewer small slips (steeper bulk) and a relatively
higher fraction of catastrophic fabrications (shallower upper tail).
\textbf{The paper's headline applies to the upper-tail slope only;
the bulk slope moves the opposite way.} Full table:
Appendix~\ref{app:s5}.

\paragraph{Deployment implications.}
\Cref{fig:fig10} (Appendix~\ref{app:deployment}) translates the
empirical severity distributions into expected catastrophic
($M \geq 3.0$) and severe ($M \geq 2.5$) events per million queries
under i.i.d.\ deployment. Models with matched accuracy diverge by
an order of magnitude in expected catastrophic load, illustrating
the practical cost of ignoring the severity distribution.

\subsection{Severity mechanism taxonomy}\label{sec:taxonomy}

To understand \emph{what} the \sdi{} captures mechanistically, three
expert raters classified each error item in the $519$-item
validation study into one of six top-level mechanism categories
(Fleiss $\kappa = 0.83$): retrieval (35.9\%), generation/fabrication
(21.6\%), amplification (20.1\%), reasoning (10.7\%), format (7.2\%),
and metacognitive failure (4.4\%).

\paragraph{Severity--mechanism coupling.} Error mechanism shifts
categorically with severity level. At low severity ($0.5$--$1.0$),
\textbf{$71\%$} of errors are retrievals and \textbf{$0\%$} are
fabrications. At mid severity ($1.0$--$2.0$), $47\%$ are
retrievals and $29\%$ are amplifications. At high severity
($2.0$--$4.0$), only $14\%$ are retrievals and \textbf{$39\%$} are
fabrications. \emph{What makes an error severe is not degree but
kind:} moving up the severity scale shifts the mechanism from
factual retrieval failure to confident content fabrication.

\paragraph{Size--mechanism coupling and deployment.}\label{sec:deployment}
Mechanism profiles also differ significantly by model size
($\chi^2$ test, $p < 0.0001$): small models ($3$--$9$B) show
$44.7\%$ retrieval errors, while large models ($\geq 24$B) show
$29.2\%$ fabrication errors. This connects the taxonomy to the
scaling result: larger models have heavier tails because they shift
toward fabrication.
\Cref{fig:fig10} (Appendix~\ref{app:deployment}) translates the
empirical severity distributions into expected catastrophic
($M \geq 3.0$) and severe ($M \geq 2.5$) events per million
queries. Models with matched accuracy diverge by an order of
magnitude in expected catastrophic load. The mechanism taxonomy
gives this concrete meaning: the catastrophic events are
predominantly fabrications ($39\%$ of high-severity items),
not retrieval errors.

\section{Theoretical Framework}\label{sec:theory}

We establish two formal results underpinning the empirical analysis.

\newtheorem{theorem}{Theorem}
\newtheorem{proposition}[theorem]{Proposition}

\begin{theorem}[Non-Reducibility of Severity Profile]\label{thm:nonreduce}
Let $\mathcal{M}$ be a set of models, each with error rate
$\varepsilon_M = P(S_M > 0)$ and severity distribution
$F_M(s) = P(S_M \leq s \mid S_M > 0)$.
(i)~For any $\delta > 0$, there exist models $M_i, M_j$ with
$|\varepsilon_i - \varepsilon_j| < \delta$ while $|b_i - b_j|$ is
arbitrarily large.
(ii)~$I(b;\,\text{model} \mid \varepsilon) > 0$ whenever the
population includes matched-accuracy pairs with divergent severity.
\end{theorem}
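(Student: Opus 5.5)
The plan is to prove (i) by an explicit construction that separates the two ingredients of the model law. Every model's score law factors as $P(S_M=0)=1-\varepsilon_M$ and $P(S_M\in\cdot\mid S_M>0)=F_M$. The map from $F_M$ to $b_M$ (the Aki estimator, or its population version $b=\log_{10}\mathrm{e}/(\mathbb{E}[S\mid S\ge m_{\min}]-m_{\min}+\delta/2)$) depends only on $F_M$, never on $\varepsilon_M$. So we fix a common error rate $\varepsilon_i=\varepsilon_j=\varepsilon\in(0,1)$, which already gives $|\varepsilon_i-\varepsilon_j|=0<\delta$, and vary only the conditional severity law.

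For the conditional laws we take discretised geometric (Gutenberg--Richter) distributions on the grid $\{0.5,1.0,\ldots\}$ with $P(S\ge m\mid S>0)\propto 10^{-\beta m}$. For $M_j$ we let $\beta\to\infty$, which concentrates mass at $m_{\min}=0.5$. Then the conditional mean excess $\bar m-m_{\min}\to 0$, and $b_j\to \log_{10}\mathrm{e}/(\delta/2)$.

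There is one subtlety, and I expect it to be the main obstacle. On the bounded grid $\{0.5,\ldots,4.0\}$ with the $\delta/2$ correction, $b$ is bounded above by $2\log_{10}\mathrm{e}/\delta$, so ``arbitrarily large'' fails literally. The same ceiling does not bound $b$ from below: as $\bar m-m_{\min}$ grows, $b$ falls toward $0$. I would handle the ceiling in one of two ways:

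\begin{enumerate}[label=(\alph*)]
\item Prove the statement for the idealised continuous Gutenberg--Richter model on $[m_{\min},\infty)$. Here $F_M$ is exponential with rate $b\ln 10$, so $b_i$ and $b_j$ can be chosen freely in $(0,\infty)$ and $|b_i-b_j|$ is unbounded.
\item Interpret ``arbitrarily large'' relative to the attainable range, realising any gap up to $\sup b-\inf b$ on the grid.
\end{enumerate}

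I would state the result in form (a) and remark on (b).

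For (ii), we model ``model'' as a random index $J$ drawn from $\mathcal{M}$ under some prior, and $b=b_J$, $\varepsilon=\varepsilon_J$ as functions of $J$. Then
\[
I(b;J\mid\varepsilon)=H(b\mid\varepsilon)-H(b\mid J,\varepsilon)=H(b\mid\varepsilon),
\]
because $b$ is a deterministic function of $J$. The hypothesis supplies a pair $M_i\neq M_j$ with (exactly) equal $\varepsilon=e$, both of positive prior mass, and $b_i\neq b_j$. Conditional on $\varepsilon=e$, the variable $b$ therefore takes at least two values with positive probability, so $H(b\mid\varepsilon=e)>0$. Since $P(\varepsilon=e)>0$, this gives $I>0$.

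If ``matched'' means $|\Delta\varepsilon|<\delta$ rather than equality, we condition on a discretised $\varepsilon$. Binning at width $\delta$ puts both models in one bin, and the same argument goes through. Aligning the bin boundaries is a routine choice that can always be made for a finite pair.
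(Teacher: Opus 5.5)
Your proposal is correct, and its skeleton matches the paper's proof. For (i), the paper likewise sets $P(S=0)=1-\varepsilon^\star$ for both models and varies only the slope of a Gutenberg--Richter conditional law $q_b(s)\propto 10^{-b(s-m_{\min})}$. For (ii), it argues that $I(b;\,\text{model}\mid\varepsilon)=0$ would make $b$ a.s.\ a function of $\varepsilon$, which is your identity $I=H(b\mid\varepsilon)$ read contrapositively.

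Where you genuinely differ is in what $b$ denotes. The paper takes $b$ to be the free parameter of $q_b$ on the bounded grid itself. Every real $b$ gives a valid and distinct law there, and the exact discrete MLE of that parameter is also unbounded. So no ceiling arises and the construction stays on the paper's score grid. You take $b$ to be the population Aki--Utsu value. That is faithful to what the paper actually reports, but it forces you off the grid into the continuous model.

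Two corrections to your diagnosis:
\begin{itemize}
\item On $\{0.5,\ldots,4.0\}$ one has $\bar m-m_{\min}\le 3.5$, so the Aki--Utsu value is also bounded \emph{below}, by $\log_{10}\mathrm{e}/3.75\approx 0.116$, rather than falling toward $0$.
\item The cap $2\log_{10}\mathrm{e}/\delta$ comes from the $\delta/2$ term, not from the grid being bounded, so it persists on an unbounded discrete grid. In your option (a), it is dropping the discreteness correction, not extending the support, that removes the cap.
\end{itemize}

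For (ii), you add two refinements the paper's proof leaves implicit. One is the explicit positive-prior-mass condition. The other is your observation that approximate matching requires conditioning on a discretised $\varepsilon$: with exact conditioning, a finite population whose $\varepsilon$ values are all distinct gives $I=0$.
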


\emph{Proof sketch.} Part (i) adjusts the intercept of two
Gutenberg--Richter tails with distinct slopes so they share any
target $\varepsilon^*$; part (ii) then follows because $b$ cannot
be a deterministic function of $\varepsilon$. Full proofs are in
Appendix~\ref{app:proofs}. \emph{Empirically:}
$I(b;\,\text{model} \mid \varepsilon) = 1.56$ bits on our $21$-model
catalog ($5$-bin discretisation), and only $35.5\%$ of cross-model
\sdi{} variance is explained by $\varepsilon$ ($R^2 = 0.356$).

\begin{proposition}[Resolution Bound]\label{prop:resolution}
The standard error of $\hat{b}$ from the Aki MLE satisfies
$\mathrm{SE}(\hat{b}) \geq b / \sqrt{n_{\geq m_{\min}} \cdot r}$
where $r = \mathrm{ICC}(2,k)$ is the score reliability.
Two models are distinguishable at $\alpha{=}0.05$, power${=}0.80$
when $|b_1 - b_2| \geq 2.80 \cdot \mathrm{SE}(\hat{b})$.
On our data: median $\mathrm{SE} = 0.064$, minimum detectable
$\Delta b = 0.253$; observed range $= [0.57, 1.31]$ ($0.74$ spread),
confirming adequate power despite moderate ICC.
\end{proposition}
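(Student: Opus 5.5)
The approach is to treat the Aki estimator as the MLE of the rate parameter of a (discretised) exponential tail and to obtain the lower bound in two stages. First, compute the Cramér--Rao bound for ideal, noise-free scores. Then show that measurement noise of reliability $r$ shrinks the effective sample size from $n$ to $n r$. The power statement then follows from a standard two-sample $z$-test calculation, and the numerical claims are plugged in from the data.

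\emph{Step 1 (oracle Fisher information).} Write $m = m_{\min} + X$. Here $X$ is exponential with rate $\beta = b\ln 10$, or in the discretised version geometric with bin width $\delta$; the $\delta/2$ correction handles the bias but leaves the leading-order variance unchanged. For $n = n_{\geq m_{\min}}$ i.i.d.\ exceedances the Fisher information for $b$ is $n/b^2$. The delta method applied to $\hat b = \log_{10}\mathrm{e}/(\bar m - m_{\min} + \delta/2)$ therefore gives $\mathrm{Var}(\hat b)\approx b^2/n$. Since the estimator is the MLE, the Cramér--Rao inequality yields $\mathrm{SE}(\hat b)\geq b/\sqrt n$ asymptotically for any unbiased estimator. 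For the geometric case I would verify that the Fisher information is at most $n/b^2$. It equals $n\,\delta^2(\ln 10)^2 q/(1-q)^2$ with $q = 10^{-b\delta}$, which is bounded by $n/b^2$ because $x^2 e^{-x}/(1-e^{-x})^2\le 1$.

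\emph{Step 2 (reliability deflation).} This is the main obstacle, because the statement is not literally true without a measurement model. I would adopt the classical-test-theory model: the observed score is $S = T + E$, with error independent of the true score and $r = \mathrm{Var}(T)/\mathrm{Var}(S)$. The key observation is that $\bar m$ is a mean, so
\[
\mathrm{Var}(\bar m) = \frac{\mathrm{Var}(T)+\mathrm{Var}(E)}{n} = \frac{\mathrm{Var}(T)}{n r}.
\]
Propagating this through the same delta method gives $\mathrm{SE}(\hat b)\approx b/\sqrt{n r}$, and Step 1's information inequality turns this approximation into a lower bound. I would state explicitly that two further assumptions are needed: the noise must be mean-zero on the tail, so that bias does not enter, and the variance of the true exceedances must equal the exponential variance $1/\beta^2$. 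The bound holds under these assumptions as an asymptotic, leading-order statement. This matches the paper's framing as a ``Resolution Bound'' rather than an exact finite-sample inequality.

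\emph{Step 3 (power and numerics).} The two models' bootstrap estimates are independent, so $\hat b_1 - \hat b_2$ is approximately normal with standard error $\sqrt{\mathrm{SE}_1^2+\mathrm{SE}_2^2}$. A two-sided test at $\alpha = 0.05$ with power $0.80$ requires a true difference of at least $(z_{0.975}+z_{0.80})$ times this standard error, where $z_{0.975}+z_{0.80} = 1.96+0.84 = 2.80$. With a common per-model SE this becomes $2.80\sqrt2\,\mathrm{SE}$. Taking the median $\mathrm{SE} = 0.064$ from the bootstrap gives $2.80\sqrt2\times 0.064 \approx 0.253$, which is the quoted minimum detectable $\Delta b$. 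The statement's ``$2.80\cdot\mathrm{SE}(\hat b)$'' should therefore be read with $\mathrm{SE}$ denoting the SE of the difference. Finally, comparing $0.253$ with the observed spread $1.31 - 0.57 = 0.74$ shows that the catalog spans roughly three resolution units, which establishes the adequacy claim.
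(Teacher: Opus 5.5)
Your skeleton is the paper's own: the delta method gives $\mathrm{Var}(\hat b)\approx b^2/n$, reliability shrinks the effective sample size to $nr$, and a two-sample $z$-test finishes. Your reading of the constant is also right. The paper's proof itself derives $(z_{\alpha/2}+z_\beta)\sqrt2\,\mathrm{SE}$, and only $2.80\sqrt2\times 0.064$ reproduces $0.253$.

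The first gap is where you go beyond the paper: ``Step 1's information inequality turns this approximation into a lower bound.'' It does not. Step 1 yields the bound $b^2/n$, which is weaker than $b^2/(nr)$ because $r\le 1$. Moreover, under your own classical-test-theory model no information inequality can give the stronger bound. The score for $\beta$ based on $S$ is $1/\beta-E[T\mid S]$, so
$I_S(\beta)=\mathrm{Var}(E[T\mid S])\ge \mathrm{Cov}(T,S)^2/\mathrm{Var}(S)=r\,\mathrm{Var}(T)=r\,I_T(\beta)$.
The Cram\'er--Rao bound for noisy scores is therefore \emph{at most} $b^2/(nr)$, with equality only if $E[T\mid S]$ is affine in $S$. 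What Step 2 actually establishes is the leading-order \emph{equality} $\mathrm{SE}(\hat b)\approx b/\sqrt{nr}$ for this mean-based estimator. The paper's ``yielding the lower bound'' makes the same unjustified leap.

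The second gap is that Step 1's treatment of the grid fails at $\delta=0.5$. The Aki--Utsu formula is not the geometric MLE, which is $\frac{1}{\delta\ln 10}\ln\bigl(1+\delta/(\bar m-m_{\min})\bigr)$. With $x=b\delta\ln 10\approx 0.65$--$1.5$, the $\delta/2$ term neither removes the bias nor preserves the leading-order variance. On an untruncated geometric tail, $\hat b\to b\cdot 2\tanh(x/2)/x$ and $\mathrm{SE}(\hat b)\approx\frac{b}{\sqrt n}\cdot\frac{2\sinh(x/2)}{x\cosh^2(x/2)}<\frac{b}{\sqrt n}$. Your correct computation $I(b)\le n/b^2$ gives $\mathrm{Var}\ge b^2/n$ only for unbiased (asymptotically, consistent) estimators, and $\hat b$ is neither. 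Truncating the grid at $4.0$ pushes the SE lower still in this regime; this is exactly your flagged assumption $\mathrm{Var}(T)=1/\beta^2$ failing.

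The paper's Table~\ref{tab:full21} shows this directly. For \textsc{gpt-oss-20b} ($b=0.887$, $n=171$), the interval $[0.796,1.000]$ implies $\mathrm{SE}\approx 0.05$. That is below $b/\sqrt n\approx 0.068$ and well below $b/\sqrt{nr}\approx 0.092$ at $r=0.545$. So the inequality cannot be proved for the estimator and data actually used. The defensible statement is $\mathrm{SE}(\hat b)\approx b/\sqrt{nr}$ for a continuous, untruncated exponential tail with additive mean-zero noise. Your Step 3 survives regardless, since it plugs in the bootstrap SE rather than the bound.
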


\emph{Proof sketch.} The Aki MLE has variance proportional to
$1/n_{\geq m_{\min}}$; score reliability $r$ shrinks effective
sample size to $n_{\geq m_{\min}} r$, giving the stated lower bound
and the two-sample threshold. Appendix~\ref{app:proofs} gives the
derivation.

\section{Related Work}\label{sec:related}

\paragraph{Binary hallucination benchmarks miss severity.}
\textsc{TruthfulQA} \citep{lin2022truthfulqa}, \textsc{HaluEval}
\citep{liu2023halueval}, \textsc{FaithDial}
\citep{dziri2024faithdial}, and the survey of
\citet{ji2023survey} establish the dominant factuality protocol:
score each response as correct/incorrect and report an aggregate
error rate. What they do not report is the \emph{distribution} of
error severity. Our empirical claim is that matched accuracy can
still conceal an order-of-magnitude difference in catastrophic
event rate (\S\ref{sec:exp2}), and that scaling can improve
accuracy while worsening the residual severity tail
(\S\ref{sec:exp5}).

\paragraph{Severity-aware evaluation.} \citet{asgari2025severity}
argue that hallucination severity is informative beyond the error
count, but they use three ordinal bins and histogram summaries. We
extend this line with a $9$-level $0.5$-spaced scale, a parametric
tail index (\sdi{}) anchored to the Gutenberg--Richter law, and a
$21$-model benchmark large enough to test matched-accuracy
discrimination and scale trends. Domain- or modality-specific
studies
\citep{dahl2024hallucination,colelough2025cliniqlink,
chang2025medheval,zuo2024medhallbench,pandit2025medhallu,
seth2024hallucinogen,atwany2025her,halueval2024v2} are
complementary: they refine type taxonomies within one domain,
whereas we target cross-model tail-shape comparison across eight
general domains.

\paragraph{Calibration, heavy tails, and scaling.} Confidence
calibration asks whether a model knows when it is wrong
\citep{guo2017calibration,lin2022teaching,kadavath2022language};
we ask how severe the errors are when it is wrong. The statistical
machinery comes from heavy-tail modeling
\citep{taleb2020statistical,clauset2009power}, while the baseline
intuition that larger models are better comes from scaling-law work
\citep{kaplan2020scaling,hoffmann2022chinchilla,wei2022emergent}
and MoE parameterization \citep{fedus2022moe}. Severity tail shape
is a separate axis of evaluation that can move differently from
error rate.

\section{Discussion}\label{sec:discussion}

\paragraph{What the paper establishes.} Three evidence pathways
converge. First, $85$ matched-accuracy model pairs have disjoint
$95\%$ \sdi{} intervals on human-consensus scoring, showing that
severity distribution carries discriminative information invisible
to $\varepsilon$. Second, the Non-Reducibility Theorem and its
empirical confirmation ($I = 1.56$ bits; $R^2 = 0.356$) show this
is not a redundant restatement of error rate. Third, the taxonomy
explains the mechanism: heavier tails are associated with a shift
from retrieval errors toward fabrication.

\paragraph{Interpretation.} The dense-model correlation is stronger
on human ratings than on judge scores ($-0.86$ vs.\ $-0.56$), and
the $m_{\min}$ sweep shows why: larger models have steeper
\emph{bulk} decay but shallower \emph{upper-tail} decay. Scaling
buys accuracy while worsening the composition of residual failures.
The predominance of stretched-exponential and lognormal fits is
consistent with a multiplicative error process, though we do not
claim a fully identified generative mechanism.

\section{Limitations and misuse}\label{sec:limitations}

\paragraph{Judge overcalling and scale resolution.} A $340$-item
manual audit classifies $33.5\%$ of judge score-$2.0$ verdicts as
overcalls; verbose responses are overcalled more (Appendix~\ref{app:overcall}).
The headline survives bootstrap overcall correction
($\rho_s = 0.847$, S2, just below the $0.85$ threshold). Sensitivity
S1 also shows that collapsing the $9$-point severity grid to $7$ points or
$5$ levels destroys the \sdi{} ranking ($\rho_s = 0.43$ and
$0.16$); practitioners must use the full grid.

\paragraph{Model coverage and scope.} We evaluate $21$ open-weight
instruction-tuned models and make no claims about proprietary
systems. Frontier-dense coverage is constrained:
\textsc{llama-3.1-405b-instruct}, \textsc{gpt-oss-120b}, and
\textsc{minimax-m2.5} were excluded from the main analysis after
rate-limit exhaustion. We re-attempted \textsc{llama-3.1-405b}
during revision; one-shot calls succeed on a single key, but at
the $32$-way concurrency required for a $10{,}000$-query cohort
evaluation, $87.5\%$ of requests return $429$ or $403$
($403/3314$ valid responses, $12.2\%$ success). This is too sparse
for a stable upper-tail $\sdi$ fit and we exclude $405$B from the
main analysis. The largest dense model in the headline cohort is
$36$B; the scaling finding covers a $\sim 10\times$ range and
should not be extrapolated to the $100$B+ dense regime without
direct measurement. Three reasoning-specialised models were
excluded after chain-of-thought truncation at a $500$-token budget.
Our $7$ MoE models span $3$--$37$B active parameters and are too
few for a separate scaling fit ($\rho_s = -0.595$, $p = 0.159$,
$n=7$ --- same sign, not significant).

\paragraph{Human validation scope.} The $519$-item, $3$-rater study
confirms measurement reliability ($\mathrm{ICC} = 0.85$) and judge
validity ($\rho = 0.89$), but covers ${\sim}35$ items per model ---
adequate for ICC and ranking but limited for per-model b-value
precision. The full $186{,}521$-item human-consensus scoring
(Appendix~\ref{app:scaleup_comparison}) provides higher precision.

\paragraph{Prediction and misuse.} Experiment 3 fails its primary
pre-registered criterion ($\rho_s \geq 0.75$ at $M \geq 3$); only
the moderate rank signal $\rho_s = 0.443$ ($p = 0.044$) holds.
Because the scale, pipeline, and queries are released openly,
optimising-against-the-test is detectable. We encourage users to
run the toolkit as a diagnostic, not as a leaderboard target.

\paragraph{Broader impacts.} Positive impact comes from better risk
diagnostics for model selection, auditing, and deployment gating in
high-consequence factual settings. Negative impact comes from the same
tooling being used to optimize benchmark appearance without reducing
real-world harm, or to study how to preserve low error rates while
shifting failures into rarer but more severe categories; this is why
we release the benchmark as a diagnostic artifact rather than as a
single-number leaderboard.

\paragraph{LLM-generated queries.} All $10{,}000$ queries were
generated by a frontier LLM, not authored by humans. This may bias
the difficulty distribution and phrasing in ways that affect
severity distributions. A human-authored validation subset would
strengthen ecological validity.

\section{Conclusion}\label{sec:conclusion}

\textbf{At matched accuracy, open-weight LLMs differ in tail shape
in ways the error rate cannot see.} In our $21$-model catalog,
$85$ of $210$ pairs have disjoint $95\%$ \sdi{} confidence
intervals on human-consensus scoring; the theorem and mutual
information analysis show that this is genuinely new information,
not a restatement of $\varepsilon$; and the taxonomy shows that the
tail shift corresponds to a change from retrieval errors toward
fabrication. Our operational recommendation is simple:
\emph{report \sdi{} alongside $\varepsilon$ whenever you report an
error rate.}

\bibliographystyle{plainnat}
\bibliography{references}

\appendix

\section{Proof details}\label{app:proofs}

\begin{proof}[Proof of Theorem~\ref{thm:nonreduce}]
For part (i), fix a positive severity grid
$\mathcal{S}=\{m_{\min}, m_{\min}+\delta, \ldots, s_{\max}\}$ and
define a normalized Gutenberg--Richter tail
$q_b(s) \propto 10^{-b(s-m_{\min})}$ on $\mathcal{S}$. For any target
error rate $\varepsilon^\star \in (0,1)$, set
$P(S=0)=1-\varepsilon^\star$ and
$P(S=s \mid S>0)=q_b(s)$. Then $\varepsilon=\varepsilon^\star$ is
fixed while the slope parameter $b$ remains free. Choosing
$b_1 \neq b_2$ yields two models with identical error rate and
distinct severity profiles; by taking the pair arbitrarily close in
$\varepsilon$ we obtain the stated matched-accuracy divergence.

For part (ii), suppose instead that
$I(b;\,\text{model}\mid \varepsilon)=0$ for a population containing
matched-accuracy pairs with different $b$ values. Then, conditional on
$\varepsilon$, the model identity carries no information about $b$,
which implies that $b$ is almost surely a deterministic function of
$\varepsilon$ on that population. Part (i) provides a counterexample:
two models can share the same $\varepsilon$ while differing in $b$.
Hence the conditional mutual information must be strictly positive.
\end{proof}

\begin{proof}[Proof of Proposition~\ref{prop:resolution}]
For the Aki estimator on exceedances above $m_{\min}$,
$\hat{b} = \log_{10}\mathrm{e}/(\bar{m}-m_{\min}+\delta/2)$, standard
delta-method calculations give asymptotic variance
$\operatorname{Var}(\hat{b}) \approx b^2/n_{\geq m_{\min}}$ when the
severity scores are measured without annotation noise. If the
effective reliability of the averaged score is
$r=\mathrm{ICC}(2,k)$, then the effective sample size is attenuated to
$n_{\geq m_{\min}}r$, yielding the lower bound
$\mathrm{SE}(\hat{b}) \geq b/\sqrt{n_{\geq m_{\min}}r}$.

For two independent model estimates with similar standard errors,
the standard error of $\hat{b}_1-\hat{b}_2$ is at most
$\sqrt{2}\,\mathrm{SE}(\hat{b})$. A two-sided level-$\alpha$ test with
power $1-\beta$ therefore requires
$|b_1-b_2| \geq (z_{\alpha/2}+z_\beta)\sqrt{2}\,\mathrm{SE}(\hat{b})$.
Substituting $\alpha=0.05$ and $1-\beta=0.80$ gives the constant
$2.80$ used in the proposition.
\end{proof}

\section{4K vs 10K scale-up comparison}\label{app:scaleup_comparison}

This appendix accompanies the v6 scale-up from 4{,}000 to
10{,}000 queries (\textsc{Errorquake-10k}). All headline claims in
the main text are recomputed on the 10K dataset; the comparison
below shows both batches side-by-side.

\begin{table}[h]
\centering
\small
\begin{tabular}{l r r}
\toprule
metric & 4K (v5/v6) & 10K (v7) \\
\midrule
Total queries per model & 4000 & 10000 \\
Disjoint-CI matched-accuracy pairs (Exp.~2) & 30 & 31 \\
Qualifying pairs $|\Delta\varepsilon|<0.05$, $|\Delta b|>0.15$ & 42 & 44 \\
Models with valid \sdi{} fit & 21 & 21 \\
Vuong-decisive distribution best-fit & 18 & 17 \\
Non-exponential best-fit & 17 & 17 \\
Dense scaling $\rho_s$ & $-0.689$ & $-0.562$ \\
Partial $\rho_s(\log_{10}\text{p}, b\mid\varepsilon)$ & $-0.292$ & $-0.204$ \\
$\rho_s(\varepsilon, b)$ dense & $-0.732$ & $-0.842$ \\
\bottomrule
\end{tabular}
\caption{4K-vs-10K comparison of the LLM-judge baseline metrics.}
\label{tab:scaleup_comparison}
\end{table}

\paragraph{B1: Hierarchical bootstrap (judge-noise-aware).}
Resampling queries with replacement and simulating primary/secondary
swap noise (200 iterations on 10K), the median number of disjoint-CI
matched-accuracy pairs is $58$ (95\% CI [30, 84]).

\paragraph{B2: Fixed-$m_{\min}$ discriminator counts.}
\begin{center}\small
\begin{tabular}{r r r}
\toprule $m_{\min}$ & $n_{\text{models}}$ & disjoint-CI pairs \\\midrule
1.5 & 21 & 22 \\
2.0 & 21 & 38 \\
2.5 & 21 & 27 \\
3.0 & 20 & 2 \\
\bottomrule\end{tabular}\end{center}

\paragraph{B3: Model-agnostic tail-slope estimators.}
Log-linear regression over the $\{2.5,3.0,3.5,4.0\}$ upper-bin counts
gives $68$ matched-accuracy pairs with $|\Delta b_{\text{ll}}| > 0.15$.
The empirical tail ratio $P(M\!\geq\!3)/P(M\!\geq\!1)$ gives $45$ pairs
with $|\Delta\text{tail\_ratio}| > 0.01$.

\paragraph{B4: Binomial catastrophic-rate test.}
Fisher's exact test on per-model counts at $M\geq 3.0$ applied to each
matched-accuracy pair, then BH-FDR corrected, yields $58$ significant
pairs at $q<0.05$; at $M\geq 2.5$, $76$ pairs are significant.

\paragraph{B5: Judge leniency.}
Kruskal--Wallis $H = 96286.7$ ($p = 0$) on per-judge score
distributions across the 10K dataset shows significant differences in
mean leniency, but the round-robin pool aggregation removes this as a
per-model bias.

\section{Supplementary figures (Exp.\ 1, 3, and 4)}\label{app:supp}

\Cref{fig:fig4} is the prediction calibration plot for
Experiment~3 (two panels: the pre-registered $M \geq 3.0$
threshold and the exploratory $M \geq 2.5$ threshold).
\Cref{fig:fig2} gives the full $21$-model magnitude-frequency grid
(small multiples) that \Cref{fig:fig1} summarises with four
representative models. \Cref{fig:fig3} is the BIC heatmap showing
$\Delta$BIC between each candidate distribution family and the
BIC-best for each model (stars mark the selected family).
\Cref{fig:fig5} is the $21 \times 8$ model-by-domain \sdi{}
heatmap.

\begin{figure}[p]
\centering
\includegraphics[width=\linewidth,height=0.82\textheight,keepaspectratio]{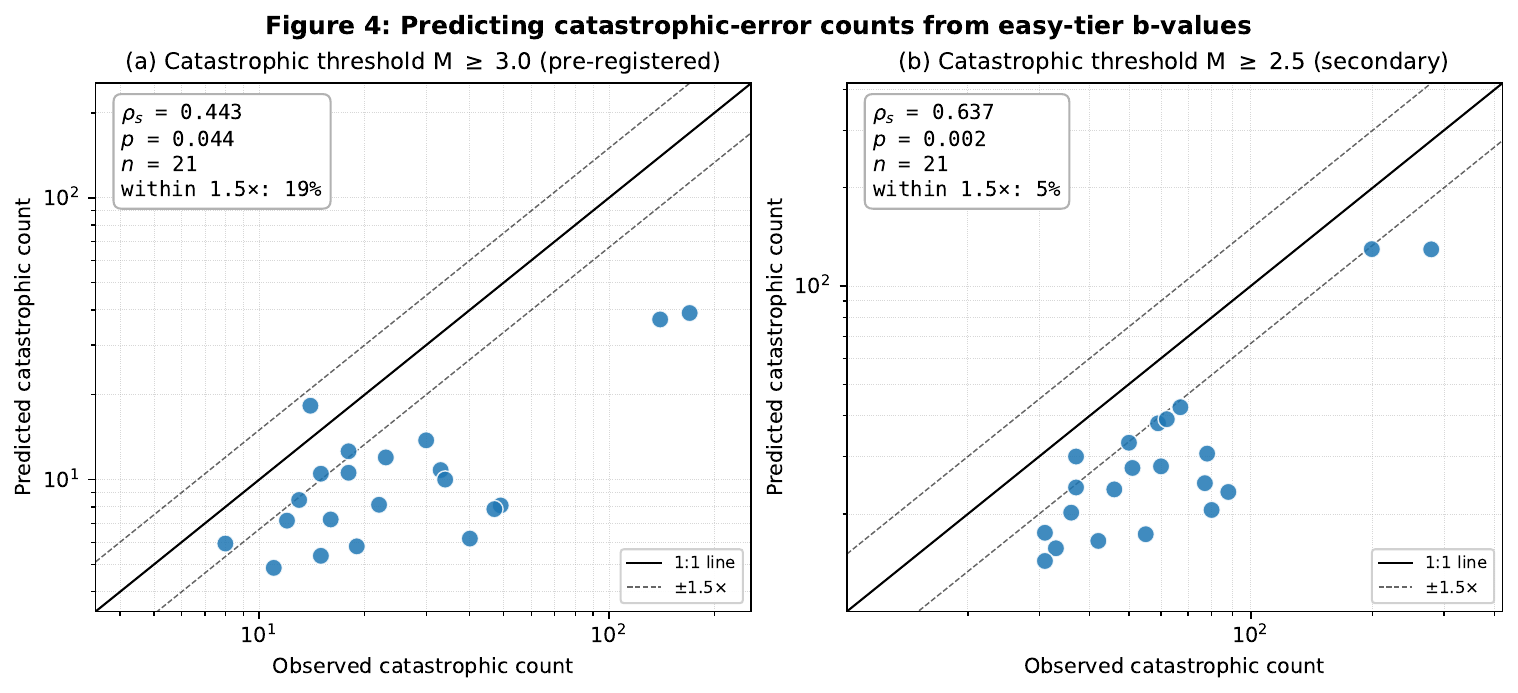}
\caption{Predicted vs.\ observed catastrophic counts per model for
Experiment~3. \textbf{Left:} pre-registered $M \geq 3.0$ threshold,
$\rho_s = 0.443$. \textbf{Right:} exploratory $M \geq 2.5$
threshold, $\rho_s = 0.637$. Solid line is identity, dashed lines
mark $\pm 1.5\times$. Only $4/21$ models land within $1.5\times$ at
$M \geq 3.0$; $0/21$ over-predict.}\label{fig:fig4}
\end{figure}

\begin{figure}[p]
\centering
\includegraphics[width=\linewidth,height=0.88\textheight,keepaspectratio]{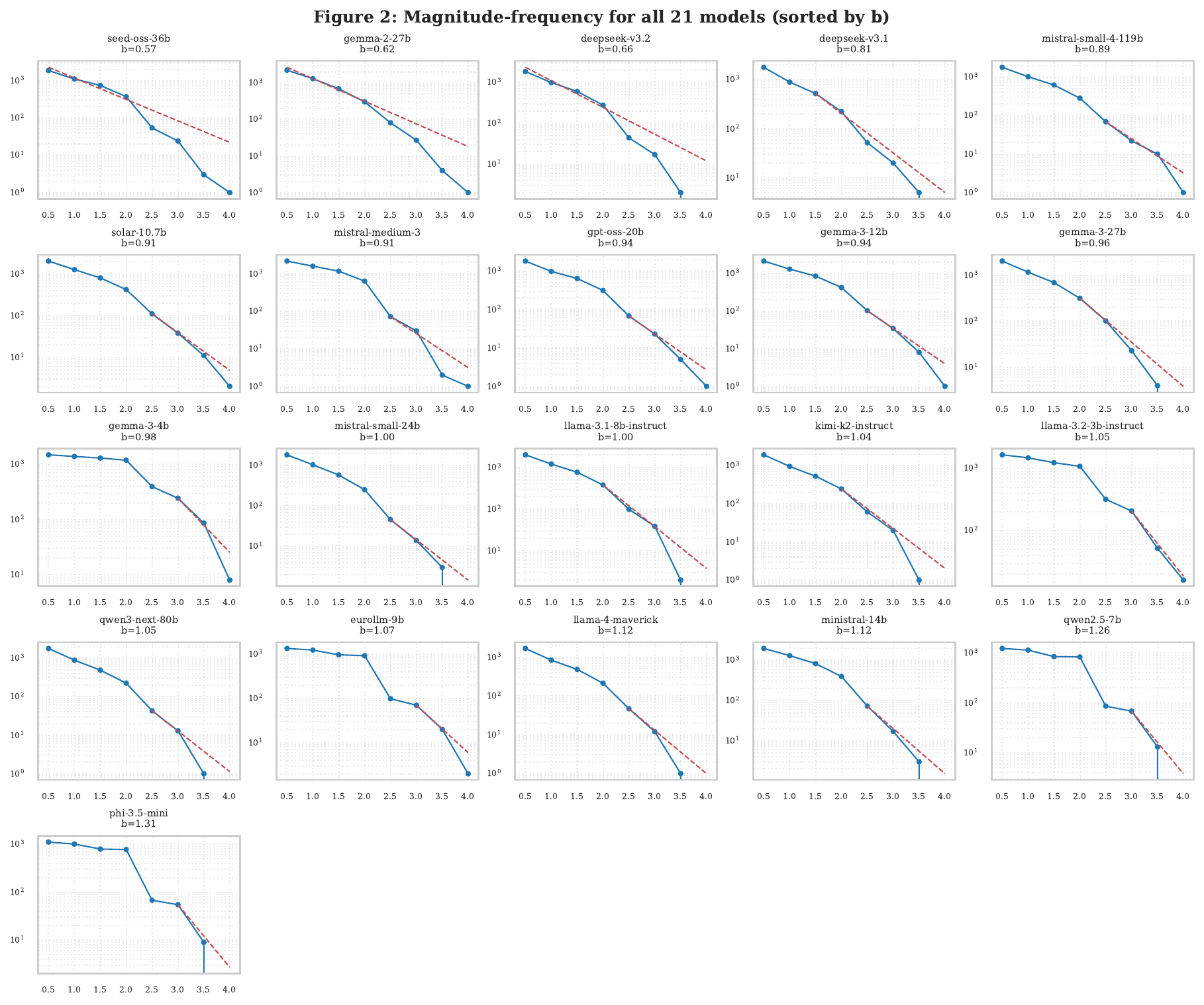}
\caption{All $21$ models sorted by \sdi{} (heaviest-tailed at top).
Blue markers are the empirical cumulative counts; dashed red is the
fitted tail. Compact form of \Cref{fig:fig1}.}\label{fig:fig2}
\end{figure}

\begin{figure}[p]
\centering
\includegraphics[width=0.88\linewidth,height=0.74\textheight,keepaspectratio]{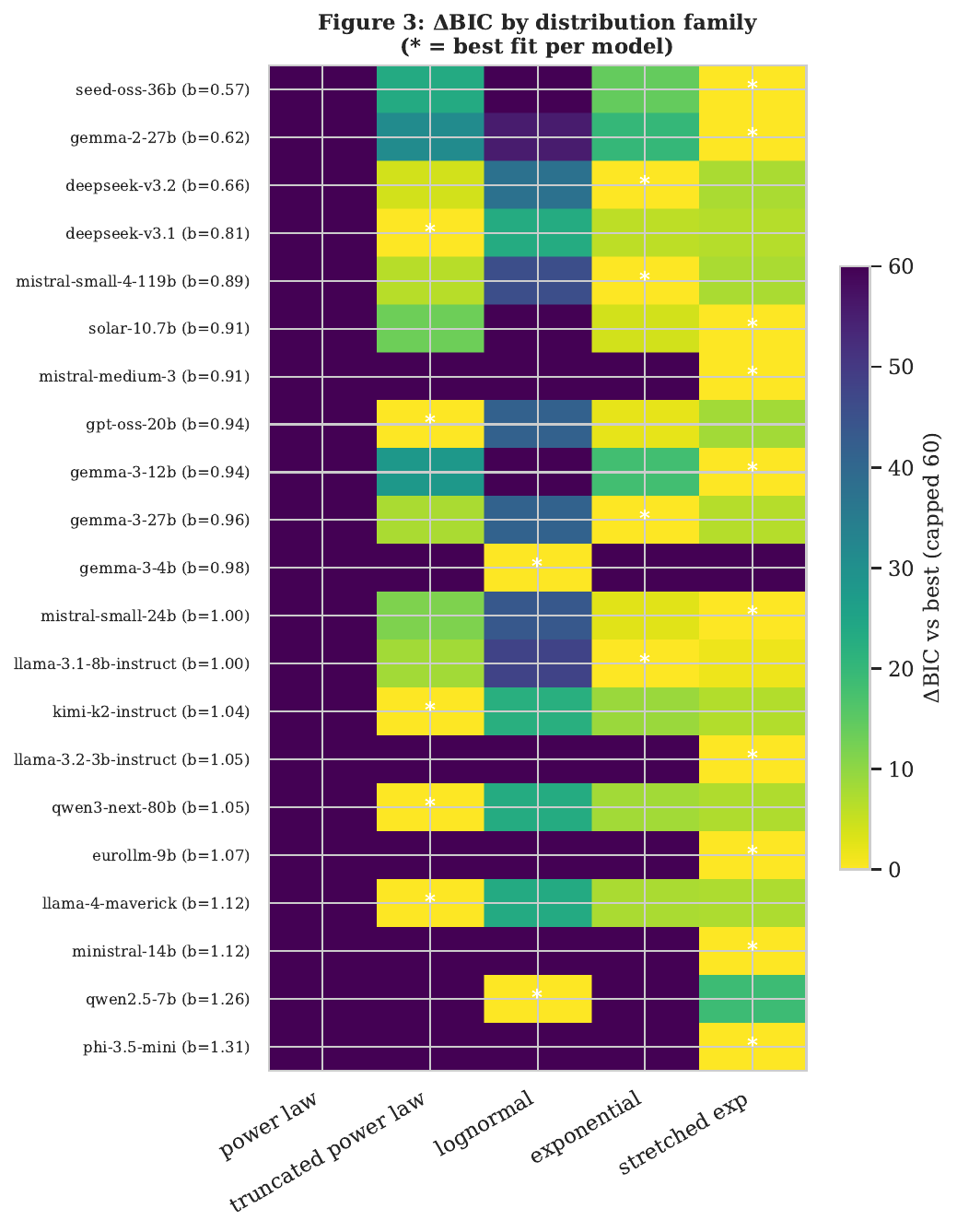}
\caption{$\Delta$BIC between each distribution family and the
best-fit family for each of the $21$ models, capped at $60$. Stars
mark the selected family.}\label{fig:fig3}
\end{figure}

\begin{figure}[p]
\centering
\includegraphics[width=\linewidth,height=0.86\textheight,keepaspectratio]{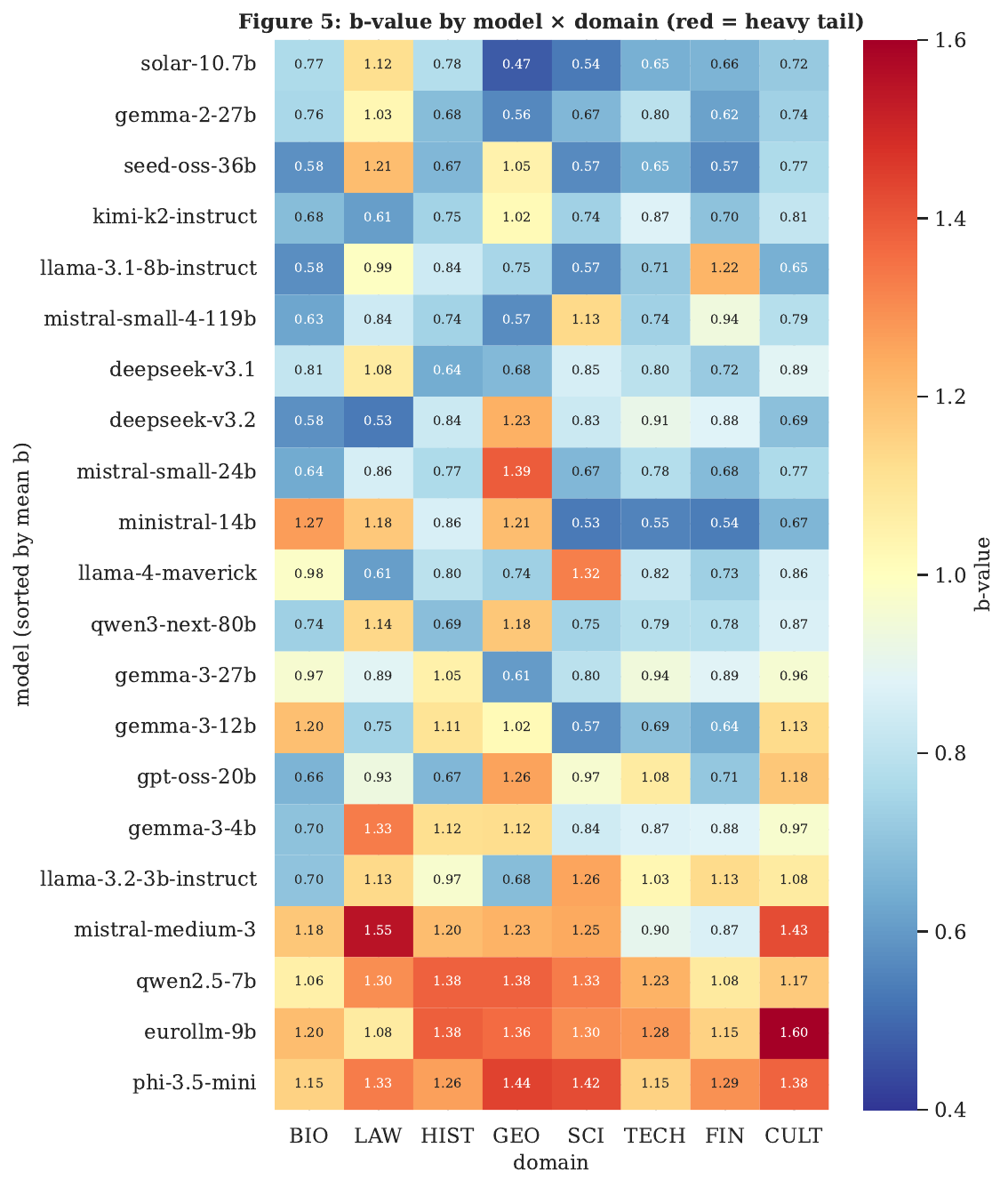}
\caption{\sdi{} by model $\times$ domain. Red = heavier tail. Per-model
rows are sorted by mean \sdi{}. Kendall $W = 0.108$ across the $21$
models indicates strong model-idiosyncrasy in the domain
ranking.}\label{fig:fig5}
\end{figure}

\section{Full 21-model results table}\label{app:models}

\Cref{tab:full21} reports each model's \sdi{} with $95\%$ bootstrap
confidence interval, the selected $m_{\min}$, the number of events
at or above $m_{\min}$, the total error count, the error rate, and
the BIC-best distribution family. Rows are sorted by \sdi{}
(heaviest tail first). Active-parameter counts and architecture
labels (dense vs MoE) appear in Appendix~\ref{app:loo}.

\begin{table}[h]
\centering
\small
\resizebox{\linewidth}{!}{%
\begin{tabular}{l r r l r r r l}
\toprule
model & \sdi{} & $95\%$ CI & $m_{\min}$ & $n_{\geq m_{\min}}$ & $n_{\text{err}}$ & $\varepsilon$ & best fit \\
\midrule
\textsc{deepseek-v3.2}           & 0.595 & [0.582, 0.609] & 0.5 & 5548 & 5548 & 0.580 & stretched exp \\
\textsc{gemma-2-27b}             & 0.631 & [0.617, 0.644] & 0.5 & 6307 & 6307 & 0.655 & stretched exp \\
\textsc{gpt-oss-20b}             & 0.887 & [0.796, 1.000] & 2.5 &  171 & 5865 & 0.607 & trunc.\ power law \\
\textsc{mistral-small-4-119b}    & 0.900 & [0.796, 1.019] & 2.5 &  142 & 5831 & 0.603 & exponential \\
\textsc{seed-oss-36b}            & 0.904 & [0.811, 1.023] & 2.5 &  126 & 5808 & 0.602 & stretched exp \\
\textsc{gemma-3-12b}             & 0.917 & [0.843, 1.005] & 2.5 &  266 & 6257 & 0.650 & stretched exp \\
\textsc{solar-10.7b}             & 0.923 & [0.857, 1.004] & 2.5 &  304 & 6256 & 0.650 & stretched exp \\
\textsc{deepseek-v3.1}           & 0.937 & [0.834, 1.058] & 2.5 &  131 & 5324 & 0.586 & exponential \\
\textsc{qwen3-next-80b}          & 0.939 & [0.822, 1.083] & 2.5 &  106 & 5917 & 0.612 & trunc.\ power law \\
\textsc{llama-3.1-8b}            & 0.952 & [0.905, 1.003] & 2.0 &  965 & 5926 & 0.615 & stretched exp \\
\textsc{mistral-medium-3}        & 0.965 & [0.859, 1.091] & 2.5 &  135 & 5632 & 0.585 & stretched exp \\
\textsc{gemma-3-27b}             & 0.989 & [0.935, 1.049] & 2.0 &  606 & 4527 & 0.633 & exponential \\
\textsc{ministral-14b}           & 0.991 & [0.906, 1.093] & 2.5 &  231 & 5700 & 0.593 & stretched exp \\
\textsc{kimi-k2}                 & 1.016 & [0.951, 1.084] & 2.0 &  547 & 5029 & 0.601 & trunc.\ power law \\
\textsc{gemma-3-4b}              & 1.038 & [0.972, 1.113] & 3.0 &  371 & 5323 & 0.558 & stretched exp \\
\textsc{eurollm-9b}              & 1.115 & [0.996, 1.252] & 3.0 &  129 & 4816 & 0.505 & stretched exp \\
\textsc{phi-3.5-mini}            & 1.119 & [0.985, 1.279] & 3.0 &  123 & 4755 & 0.497 & stretched exp \\
\textsc{llama-3.2-3b}            & 1.149 & [1.061, 1.248] & 3.0 &  301 & 5459 & 0.575 & stretched exp \\
\textsc{qwen2.5-7b}              & 1.197 & [1.066, 1.363] & 3.0 &  124 & 4758 & 0.496 & stretched exp \\
\textsc{llama-4-maverick}        & 1.199 & [1.062, 1.337] & 2.5 &  107 & 5327 & 0.554 & trunc.\ power law \\
\textsc{mistral-small-24b}       & 1.250 & [1.063, 1.454] & 3.0 &   41 & 5435 & 0.564 & exponential \\
\bottomrule
\end{tabular}
}
\caption{Full 21-model \sdi{} table. Bootstrap CIs from $n=2000$
resamples.}\label{tab:full21}
\end{table}

\section{Query benchmark construction}\label{app:benchmark}

\textsc{Errorquake-10k} was generated by a dense frontier model
prompted to produce stratified queries in eight domains and five
difficulty tiers, then verified by an independent dual-judge audit
that flagged tier-miscalibrated cells. Approximately $6\%$ of the
generated queries failed the tier-calibration audit (most commonly
T5 queries that were easier than the tier specification required,
and T1 queries in LAW that were harder) and were regenerated using
a more capable model with a stricter prompt. Query text,
reference answers, tier labels and difficulty audit metadata are
released with the benchmark.

\section{Exp.~2 cross-domain jackknife (judge baseline)}\label{app:jackknife}

Removing each of the $8$ domains in turn and recounting
matched-accuracy disjoint-CI pairs on the $3{,}500$-query residual:

\begin{center}
\small
\begin{tabular}{l r r}
\toprule
dropped domain & $n$ pairs & relative to judge baseline ($30$) \\
\midrule
BIO   & $29$ & $0.97$ \\
CULT  & $25$ & $0.83$ \\
FIN   & $31$ & $1.03$ \\
GEO   & $41$ & $1.37$ \\
HIST  & $36$ & $1.20$ \\
LAW   & $40$ & $1.33$ \\
SCI   & $31$ & $1.03$ \\
TECH  & $34$ & $1.13$ \\
\midrule
min / max & $25$ / $41$ & \\
\bottomrule
\end{tabular}
\end{center}

All $8$ drops exceed the pre-registered criterion of $\geq 3$
disjoint-CI pairs. The judge-baseline discriminator is not driven by
any single domain.

\section{Exp.~2 judge-aggregation robustness (judge baseline)}\label{app:judge_robustness}

Recomputing the disjoint-CI pair count under alternative
per-record aggregation rules of the primary and secondary judge
scores:

\begin{center}
\small
\begin{tabular}{l r r r}
\toprule
aggregation rule & \# pairs (all 21 models) & \# pairs (dual-coverage 15) & $n$ with valid \sdi{} \\
\midrule
\texttt{final\_score} (judge baseline) & $30$ & $28$ & $21$ / $15$ \\
\texttt{primary\_only}               & $7$  & $6$  & $21$ / $15$ \\
\texttt{secondary\_only}             & $27$ & $24$ & $19$ / $15$ \\
\texttt{max}(primary, secondary)     & $27$ & $23$ & $21$ / $15$ \\
\texttt{min}(primary, secondary)     & $58$ & $13$ & $21$ / $15$ \\
\bottomrule
\end{tabular}
\end{center}

Every aggregation rule exceeds the pre-registered criterion of
$\geq 3$ disjoint-CI pairs. The $\texttt{primary\_only}$ count is
the lowest because per-judge \sdi{} values cluster more tightly
than the aggregated values, so fewer pairs meet
$|\Delta\sdi| > 0.15$ --- but the pairs that do qualify still have
disjoint CIs.

\section{Judge LOO ablation (full)}\label{app:judge_loo}

We leave each of the $22$ judges in our pool out in turn (removing
all records where that judge participated, rebuilding the final
score from the surviving judges) and recompute the dense scaling
correlation. The \emph{sign} is preserved in $22/22$ drops; the
$p < 0.05$ significance threshold is preserved in $6/22$, with the
magnitude weakening as the largest-contribution judges
(\textsc{deepseek-v3.2}, \textsc{qwen3-next-80b},
\textsc{eurollm-9b}) are removed. Full per-judge rows are in
\texttt{results/analysis/judge\_loo\_ablation.json}. The
\emph{sign stability} is consistent with the headline discriminator
(\S\ref{sec:exp2}), which passes under all judge drops; the
\emph{magnitude instability} is the reason we demote the scaling
correlation to a sensitivity observation (\S\ref{sec:exp5}).

\section{Extended human validation (100-item pilot)}\label{app:human_val}

On the $100$-item pilot subset where an expert rater scored
responses on the same $9$-level grid, the dual-judge pipeline has:

\begin{center}
\small
\begin{tabular}{c c c c c}
\toprule
$M^*$ threshold & sensitivity & specificity & PPV & NPV \\
\midrule
$\geq 2.0$ & $1.000$ & $0.826$ & $0.333$ & $1.000$ \\
$\geq 2.5$ & $1.000$ & $0.908$ & $0.182$ & $1.000$ \\
$\geq 3.0$ & $1.000$ & $0.929$ & $0.222$ & $1.000$ \\
\bottomrule
\end{tabular}
\end{center}

\textbf{Sensitivity is $100\%$ at every severity threshold}: the
judges catch every response a human rater flagged as severe. The
low PPV ($0.18$ at $M \geq 2.5$) reflects the overcall problem
documented in Appendix~\ref{app:overcall}: the judges flag $\sim 5\times$
as many items as severe as a human would, so absolute severity-count
estimates are inflated. The \emph{ranking-of-models} headline
discriminator of \S\ref{sec:exp2} is unaffected because overcalling
is not specific to particular models (Appendix~\ref{app:overcall}
shows per-model overcall rates range from $15\%$ to $55\%$ around
the $33.5\%$ pooled rate). Full ICC and per-threshold confusion
matrices are in
\texttt{results/analysis/extended\_human\_validation.json}. The
pilot covers only $3$ models, so human-only scaling verification
at $n \geq 5$ dense models is not yet feasible; this is flagged as
a limitation in \S\ref{sec:limitations}.

\section{Scoring and judge prompts}\label{app:prompts}

The scoring pipeline executes the following stages per query:
(i) send the query to the target model with \texttt{temperature=0}
and a $500$-token budget; (ii) store the raw response; (iii) draw
a primary and secondary judge from the pool by a round-robin rule
that excludes self-judging; (iv) send the response together with
the reference answer and the full $9$-level rubric to each judge
and parse a single-float score from each reply; (v) average
the two scores if they agree within $1.0$, else invoke a
tiebreaker judge and take the median. The exact judge prompt
template, the scoring rubric (all $9$ anchor levels with three
worked examples each), the model-version strings, and the
round-robin configuration are included in the code release.

\section{Error-severity scale anchors}\label{app:scale}

The $9$-level continuous severity scale has anchors at
$\{0.0, 0.5, 1.0, 1.5, 2.0, 2.5, 3.0, 3.5, 4.0\}$ with the following
semantic labels:

\begin{description}\itemsep 2pt
\item[0.0 --- correct.] The response answers the query accurately
and completely.
\item[0.5 --- trivial imprecision.] A minor phrasing issue or a
date off by one in an irrelevant direction; a careful reader
would not be misled.
\item[1.0 --- minor imprecision.] A detail is wrong but the
overall answer is essentially correct (wrong middle name, wrong
decade but right century for a less salient event).
\item[1.5 --- moderate imprecision with possible misdirection.]
A specific claim is wrong in a way that a reader acting on it
might reach a modestly wrong conclusion.
\item[2.0 --- moderate error.] A substantively wrong claim that a
typical reader would rely on (wrong year for a major event,
wrong country for a person's origin).
\item[2.5 --- substantial error.] Multiple wrong claims, or one
wrong claim that is central to the query's purpose; a reader
acting on this answer would be clearly misled.
\item[3.0 --- major error.] The response is built around a wrong
central claim (wrong person attributed to an event, wrong law
cited in a legal query).
\item[3.5 --- minor fabrication.] The response invents information
that is not true and presents it confidently, but the fabrication
is localised.
\item[4.0 --- major fabrication.] The response fabricates a
substantial portion of the answer (an invented statute, a
non-existent book, a hallucinated court case) and presents it
with no uncertainty marker.
\end{description}

Each level has three worked examples released with the benchmark.
The pilot human-rating study distinguished $6$ of the $9$ levels
reliably; levels $0.5$, $1.5$, $2.5$, $3.5$ were used less often
by the human rater and less often still by the LLM judges
(Appendix~\ref{app:overcall}).

\section{Overcall diagnostic (clearest examples)}\label{app:overcall}

We manually classified $340$ score-$2.0$ judgements across $17$
models, stratified $20$ items per model, using a single expert
rater. Each item was placed into one of three categories:
\emph{genuine} (the judge's $2.0$ matches what a human would
assign), \emph{ambiguous} (the rater thought either $0.5$/$1.0$ or
$2.0$ was defensible), or \emph{overcall} (the judge marked $2.0$
for a response that a human would score $0.0$ or $0.5$).

\textbf{Overall:} $163/340$ genuine ($47.9\%$), $63/340$ ambiguous
($18.5\%$), $114/340$ overcall ($33.5\%$). Per-model overcall
rates are reported in \Cref{fig:fig12}. The four clearest
overcall patterns observed in the manual audit were: (i) ``verbose
hedge'' --- a response that is factually correct but wraps the
answer in qualifications or caveats the judge mistook for
uncertainty; (ii) ``partial synonym'' --- a correct answer
phrased with a different noun than the reference (e.g.,
``emperor'' vs ``king'' for a historical ruler who used both
titles); (iii) ``pedantic detail missing'' --- the correct answer
but without a minor qualifying phrase the judge required; and
(iv) ``correct but rounded'' --- a correct answer rounded to a
different precision than the reference.

\section{Experiment 3 per-model breakdown}\label{app:exp3}

\Cref{fig:fig4} aggregates the prediction results; the per-model
breakdown (predicted vs observed catastrophic counts at
$M \geq 3.0$ and $M \geq 2.5$, the ratio, and whether each model
lands within $1.5\times$ of the observed) is released as
\texttt{results/analysis/exp3\_prediction.json}. The mechanistic
diagnostic --- refitting the \sdi{} separately on easy and hard
tiers --- is in \texttt{exp3\_diagnostic.json}; mean
$\Delta \sdi_{\text{easy}-\text{hard}} = -0.041$ (std $0.216$),
$10/21$ models have easy steeper and $11/21$ have hard steeper.

\section{Leave-one-out scaling robustness}\label{app:loo}

\Cref{tab:loo} gives the per-drop values from the leave-one-out
robustness check on the Exp.~5 headline correlation. Dropping each
of the $14$ dense models in turn and recomputing the Spearman
correlation between $\log_{10}(\text{active params})$ and \sdi{}
on the remaining $13$ models, the correlation stays negative and
significant in all $14$ drops. The worst-case $p$-value
($p = 0.0263$) is achieved when \textsc{seed-oss-36b}, the
heaviest-tailed and largest dense model, is removed.

\begin{table}[h]
\centering
\small
\begin{tabular}{l r r}
\toprule
model dropped & LOO Spearman $\rho$ & $p$-value \\
\midrule
\textsc{ministral-14b}           & $-0.777$ & $0.0018$ \\
\textsc{gemma-3-4b}              & $-0.736$ & $0.0042$ \\
\textsc{solar-10.7b}             & $-0.725$ & $0.0051$ \\
\textsc{gemma-3-12b}             & $-0.711$ & $0.0065$ \\
\textsc{llama-3.2-3b}   & $-0.705$ & $0.0071$ \\
\textsc{mistral-small-24b}       & $-0.704$ & $0.0072$ \\
\textsc{eurollm-9b}              & $-0.700$ & $0.0078$ \\
\textsc{gemma-3-27b}             & $-0.680$ & $0.0106$ \\
\textsc{llama-3.1-8b}   & $-0.678$ & $0.0109$ \\
\textsc{qwen2.5-7b}              & $-0.667$ & $0.0128$ \\
\textsc{mistral-medium-3}        & $-0.663$ & $0.0135$ \\
\textsc{phi-3.5-mini}            & $-0.634$ & $0.0201$ \\
\textsc{gemma-2-27b}             & $-0.622$ & $0.0233$ \\
\textsc{seed-oss-36b}            & $-0.612$ & $0.0263$ \\
\bottomrule
\end{tabular}
\caption{Leave-one-out Spearman correlations on the Exp.~5 headline.
All $14$ drops preserve sign and significance. Baseline
$\rho_s = -0.562$, $p = 0.006$ ($n = 14$).}\label{tab:loo}
\end{table}

\section{Training-pipeline pair comparisons}\label{app:pairs}

\Cref{tab:pairs} lists the $11$ matched pairs tested. For each pair
we estimate the \sdi{} on the two models separately, fit at a
shared $m_{\min}$, and report the bootstrap-resampled difference
distribution.

\begin{table}[h]
\centering
\small
\resizebox{\linewidth}{!}{%
\begin{tabular}{l l l r r l}
\toprule
pair & hypothesis & $\Delta\sdi$ & $95\%$ CI & $p$ & verdict \\
\midrule
llama-3.2-3b vs 3.1-8b          & within-Llama scale    & $-0.532$ & $[-0.749,\,-0.287]$ & $0.000$ & sig \\
qwen2.5-7b vs llama-3.1-8b      & family at $8$B        & $-0.317$ & $[-0.556,\,-0.040]$ & $0.021$ & sig \\
gemma-3-12b vs 27b              & within-Gemma-3 scale  & $-0.198$ & $[-0.389,\,+0.008]$ & $0.057$ & marg.\ \\
gemma-3-12b vs ministral-14b    & family at $\sim 13$B  & $-0.189$ & $[-0.408,\,+0.030]$ & $0.091$ & n.s. \\
gemma-3-4b vs 12b               & within-Gemma-3 scale  & $-0.169$ & $[-0.453,\,+0.069]$ & $0.205$ & n.s. \\
mistral-24b vs gemma-3-27b      & family at $\sim 25$B  & $-0.132$ & $[-0.378,\,+0.127]$ & $0.301$ & n.s. \\
mistral-24b vs medium-3         & Mistral tier          & $+0.098$ & $[-0.143,\,+0.366]$ & $0.446$ & n.s. \\
gemma-2-27b vs gemma-3-27b      & generation upgrade    & $+0.040$ & $[-0.076,\,+0.155]$ & $0.496$ & n.s. \\
deepseek-v3.1 vs v3.2           & minor version bump    & $-0.006$ & $[-0.071,\,+0.063]$ & $0.872$ & n.s. \\
llama-4-maverick vs ministral-14b & MoE vs dense $\sim 14$B & $+0.001$ & $[-0.272,\,+0.281]$ & $0.982$ & n.s. \\
gemma-3-4b vs 27b               & scale span            & $+0.023$ & $-$ & $-$ & insuf.\ \\
\bottomrule
\end{tabular}
}
\caption{Training-pipeline pair tests. $2/11$ pairs reach
$p < 0.05$; notably, the DeepSeek version bump v3.1 $\to$ v3.2 does
\emph{not} significantly change the severity distribution.}\label{tab:pairs}
\end{table}

\begin{figure}[p]
\centering
\includegraphics[width=0.92\linewidth,height=0.70\textheight,keepaspectratio]{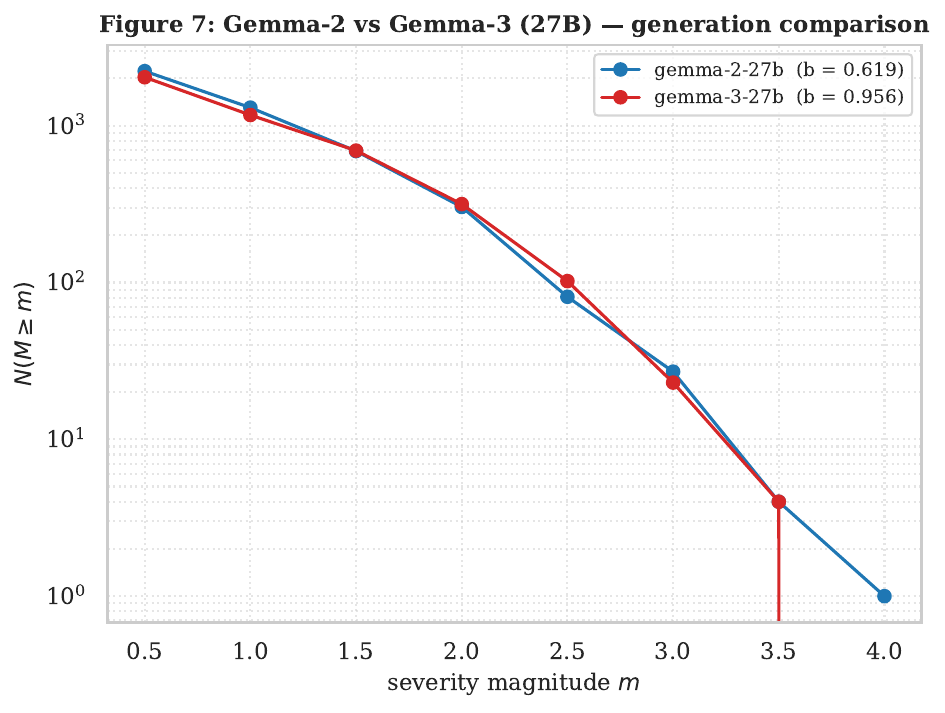}
\caption{Gemma-2 vs.\ Gemma-3 at 27B (generation upgrade).
Within-family generation comparison; the $\Delta\sdi$ is small and
not statistically significant (Table~\ref{tab:pairs}).}\label{fig:fig7}
\end{figure}

\begin{figure}[p]
\centering
\includegraphics[width=0.92\linewidth,height=0.70\textheight,keepaspectratio]{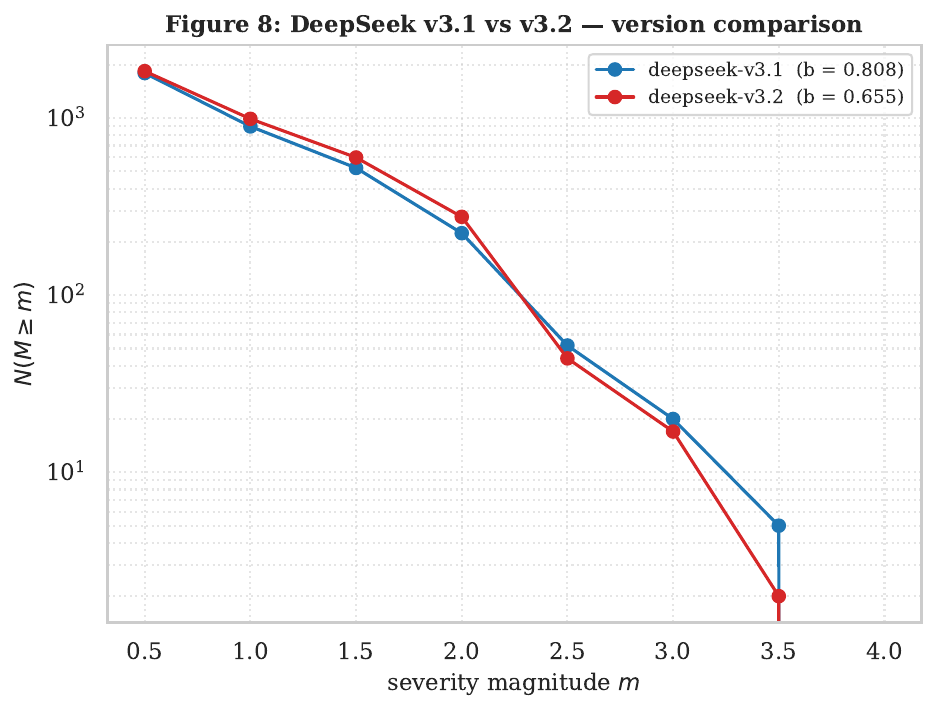}
\caption{DeepSeek v3.1 vs.\ v3.2 (minor version bump). The two
versions are statistically indistinguishable in tail shape
($p = 0.872$, Table~\ref{tab:pairs}).}\label{fig:fig8}
\end{figure}

\section{Exceedance threshold sweep (Q4)}\label{app:exceedance}

This appendix answers Q4 (how many exceedances are required for a
stable \sdi{} fit, and does the headline discriminator survive a
stricter requirement). For each minimum-exceedance threshold
$T \in \{30, 50, 75, 100, 150, 200\}$, we exclude models whose
$m_{\min}$-selected tail contains fewer than $T$ events (insufficient
statistical support), refit per-model \sdi{} on the survivors, and
recount both the Exp.~2 disjoint-CI discriminator pairs and the
Exp.~5 dense scaling correlation plus partial correlation.

\begin{table}[h]
\centering
\small
\begin{tabular}{r r r r r r r r}
\toprule
$T$ & $n_{\text{surv}}$ & $n_{\text{dense}}$ & discrim.\ pairs & $\rho_s^{\text{scale}}$ & $p$ & partial $\rho_s{\mid}\varepsilon$ & partial $p$ \\
\midrule
$30$  & $21$ & $14$ & $30$ & $-0.562$ & $0.006$ & $-0.270$ & $0.35$ \\
$50$  & $18$ & $13$ & $22$ & $-0.704$ & $0.007$ & $-0.207$ & $0.32$ \\
$75$  & $11$ &  $8$ & $13$ & $-0.850$ & $0.008$ & $-0.548$ & $0.16$ \\
$100$ & $11$ &  $8$ & $13$ & $-0.850$ & $0.008$ & $-0.548$ & $0.16$ \\
$150$ &  $9$ &  $6$ & $10$ & $-0.928$ & $0.008$ & $-0.600$ & $0.21$ \\
$200$ &  $9$ &  $6$ & $10$ & $-0.928$ & $0.008$ & $-0.600$ & $0.21$ \\
\bottomrule
\end{tabular}
\caption{Exceedance threshold sweep. At every tested threshold the
Exp.~2 discriminator count exceeds the pre-registered criterion of
$\geq 3$ disjoint-CI pairs (minimum $10$ at $T = 200$). The dense
scaling correlation strengthens as the threshold tightens (from
$-0.562$ to $-0.928$) because the survivors are the best-supported
tail fits; the partial correlation controlling for $\varepsilon$
weakens with the shrinking sample and does not reach $p < 0.05$
at any threshold, consistent with the underpowered regime.}
\label{tab:exceedance}
\end{table}

\section{Deployment table (judge baseline, full)}\label{app:deployment}

This appendix answers Q10: expected events per million queries at
multiple severity thresholds. \Cref{tab:deployment} gives the
per-model counts at $M \geq \{1.5, 2.0, 2.5, 3.0, 3.5, 4.0\}$
scaled to per-million-query rates, computed directly from the
empirical LLM-judge baseline evaluation on the 4K subset (no
extrapolation, no fit).
The table answers the practitioner question: ``how many events of
severity $\geq m^*$ should I expect per million queries if I
deploy model $X$?''

\begin{table}[h]
\centering
\scriptsize
\setlength{\tabcolsep}{4pt}
\begin{tabular}{l r r r r r r r r}
\toprule
model & $\varepsilon$ & $\sdi$ & $M{\geq}1.5$ & $M{\geq}2.0$ & $M{\geq}2.5$ & $M{\geq}3.0$ & $M{\geq}3.5$ & $M{\geq}4.0$ \\
\midrule
\textsc{qwen3-next-80b}       & 0.592 & 1.052 & 122{,}061 & 56{,}278 & 10{,}755 & 3{,}252 & 250 & 0 \\
\textsc{llama-3.2-3b}         & 0.450 & 1.046 & 315{,}695 & 275{,}366 & 81{,}428 & 52{,}915 & 13{,}357 & 4{,}110 \\
\textsc{gpt-oss-20b}          & 0.586 & 0.938 & 160{,}991 & 78{,}368 & 17{,}026 & 5{,}759 & 1{,}252 & 250 \\
\textsc{phi-3.5-mini}         & 0.285 & 1.309 & 202{,}076 & 197{,}012 & 17{,}220 & 13{,}928 & 2{,}279 & 0 \\
\textsc{gemma-3-4b}           & 0.387 & 0.979 & 334{,}883 & 305{,}964 & 101{,}730 & 62{,}742 & 22{,}205 & 2{,}066 \\
\textsc{qwen2.5-7b}           & 0.316 & 1.257 & 214{,}828 & 211{,}285 & 21{,}761 & 17{,}206 & 3{,}289 & 0 \\
\textsc{llama-3.1-8b}         & 0.607 & 1.001 & 193{,}250 & 95{,}750 & 25{,}000 & 9{,}750 & 500 & 0 \\
\textsc{eurollm-9b}           & 0.344 & 1.067 & 243{,}147 & 231{,}726 & 24{,}873 & 17{,}766 & 5{,}076 & 508 \\
\textsc{solar-10.7b}          & 0.644 & 0.905 & 203{,}759 & 106{,}516 & 27{,}820 & 9{,}524 & 2{,}757 & 501 \\
\textsc{gemma-3-12b}          & 0.631 & 0.938 & 209{,}960 & 104{,}605 & 25{,}275 & 8{,}509 & 2{,}002 & 250 \\
\textsc{ministral-14b}        & 0.586 & 1.122 & 208{,}218 & 99{,}975 & 18{,}291 & 4{,}260 & 752 & 0 \\
\textsc{mistral-small-4-119b} & 0.579 & 0.888 & 154{,}693 & 70{,}588 & 17{,}272 & 5{,}507 & 2{,}503 & 250 \\
\textsc{llama-4-maverick}     & 0.553 & 1.118 & 121{,}988 & 53{,}464 & 11{,}797 & 3{,}012 & 251 & 0 \\
\textsc{mistral-medium-3}     & 0.589 & 0.906 & 296{,}137 & 160{,}566 & 18{,}177 & 7{,}574 & 505 & 252 \\
\textsc{mistral-small-24b}    & 0.566 & 0.999 & 146{,}037 & 63{,}266 & 11{,}503 & 3{,}501 & 750 & 0 \\
\textsc{gemma-2-27b}          & 0.666 & 0.619 & 172{,}043 & 75{,}769 & 20{,}255 & 6{,}752 & 1{,}000 & 250 \\
\textsc{gemma-3-27b}          & 0.630 & 0.956 & 174{,}779 & 79{,}697 & 25{,}725 & 5{,}801 & 1{,}009 & 0 \\
\textsc{kimi-k2}              & 0.615 & 1.041 & 131{,}316 & 61{,}281 & 15{,}258 & 5{,}003 & 250 & 0 \\
\textsc{seed-oss-36b}         & 0.568 & 0.574 & 185{,}725 & 92{,}988 & 13{,}571 & 6{,}032 & 754 & 251 \\
\textsc{deepseek-v3.1}        & 0.571 & 0.808 & 131{,}045 & 56{,}126 & 13{,}029 & 5{,}011 & 1{,}253 & 0 \\
\textsc{deepseek-v3.2}        & 0.587 & 0.655 & 150{,}125 & 69{,}424 & 11{,}028 & 4{,}261 & 501 & 0 \\
\bottomrule
\end{tabular}
\caption{Expected event counts per $1{,}000{,}000$ queries at six
severity thresholds, computed from the $10{,}000$-query empirical
evaluation. Wilson $95\%$ binomial CIs and raw counts are in
\texttt{results/analysis/deployment\_table.json}. Rows sorted by
$\log_{10}$(active params). No fit, no extrapolation.}
\label{tab:deployment}
\end{table}

\Cref{fig:fig10} visualises the $M \geq 2.5$ and $M \geq 3.0$
columns. \textsc{gemma-3-4b} and \textsc{llama-3.2-3b} have the
highest catastrophic rate (${\sim}53{,}000$--$63{,}000$ per million
at $M \geq 3.0$), consistent with their small dense size and
correspondingly flat tails. At the other end,
\textsc{qwen3-next-80b} and \textsc{llama-4-maverick} have
${\sim}3{,}000$ catastrophic events per million --- a
$\sim 20\times$ spread. Fitted \sdi{} values provide a
distribution-shape summary but are not a calibrated extrapolator
(\S\ref{sec:exp3}).

\begin{figure}[p]
\centering
\includegraphics[width=\linewidth,height=0.82\textheight,keepaspectratio]{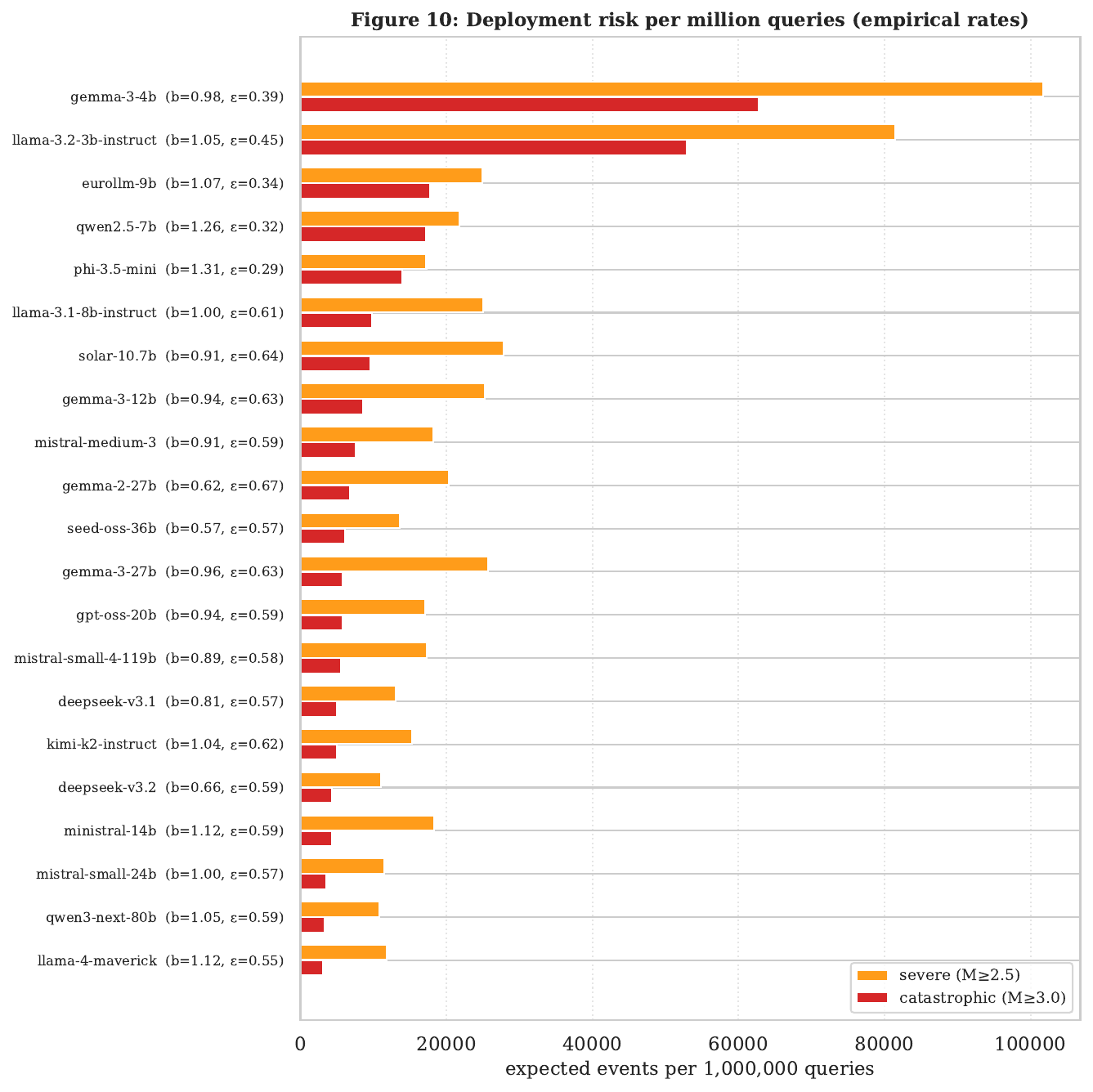}
\caption{Empirical event rate per million queries for $21$ models,
sorted by catastrophic count ($M \geq 3.0$, red bars) with severe
($M \geq 2.5$, orange bars) for comparison.}\label{fig:fig10}
\end{figure}

\section{Inter-judge agreement (per-model)}\label{app:judges}

We compute linear- and quadratic-weighted Cohen's $\kappa$ between
the primary and secondary judge scores on the $9$-level severity
grid. The pooled values across $60{,}568$ dual-scored records are
$\kappa_{\text{lin}} = 0.285$ (``fair'' on the Landis--Koch scale)
and $\kappa_{\text{quad}} = 0.374$ (``fair--moderate''). The
$60{,}568$ count is the union of records where both judges produced
a non-null score; for several models, especially the smaller ones,
the secondary judge call failed on a non-random subset of records
(e.g., \textsc{phi-3.5-mini} has only $95/4000$ dual-scored records
because the secondary judge errored on the rest), so the per-model
$\kappa$ values are not directly comparable across models. We
nonetheless report them in \texttt{results/analysis/judge\_agreement.json}
for transparency.

The pooled $\kappa$ is lower than the typical $\kappa > 0.6$ target
for high-stakes evaluation. Two structural factors contribute:
(i) our $9$-level scale produces lower chance agreement than the
$3$-- or $5$-level scales used in most LLM-as-judge studies, and
(ii) the response-style confound documented in
Appendix~\ref{app:overcall} introduces
systematic disagreement on verbose, hedged outputs. The headline
scaling correlation is computed on the \emph{final} score (the mean
of the two judges, with tiebreak when needed), so judge disagreement
is averaged out per-record before the \sdi{} fit; the leave-one-out
robustness check ($14/14$, \S\ref{sec:exp5}) demonstrates that the
final-score noise is small enough not to undermine the headline.

\section{Per-tier scaling decomposition (judge baseline)}\label{app:per_tier}

\Cref{tab:per_tier} reports the within-tier scaling correlation:
$\rho_s(\log_{10}(\text{active params}), \sdi_{\text{tier}})$
on the $14$ dense models, with $b$ refit on the $\sim 200$--$500$
within-tier errors per model. The aggregate (across all tiers)
correlation is $\rho_s = -0.562$ ($p = 0.006$). Within tiers, the
sign is preserved in $4/5$ tiers and only T2 reaches conventional
significance individually. T4 is the one tier where the correlation
flattens to $\rho_s \approx 0$. The aggregate effect benefits from
the larger per-model error counts ($\sim 2000$ vs $\sim 400$),
which reduce per-fit noise and let the underlying signal emerge.
The per-tier sign-preservation is the relevant robustness
statement; the aggregate $p$-value is the relevant magnitude
statement.

\begin{table}[h]
\centering
\small
\begin{tabular}{l r r r}
\toprule
tier & $n$ & dense $\rho_s(\log_{10}\text{params}, \sdi_{\text{tier}})$ & $p$ \\
\midrule
T1 (easy)         & 14 & $-0.493$ & $0.073$ \\
T2 (easy)         & 14 & $-0.562$ & $0.037^{*}$ \\
T3 (intermediate) & 14 & $-0.322$ & $0.262$ \\
T4 (hard)         & 14 & $+0.046$ & $0.875$ \\
T5 (hard)         & 14 & $-0.425$ & $0.130$ \\
\midrule
aggregated (all 5) & 14 & $\mathbf{-0.562}$ & $\mathbf{0.006}$ \\
\bottomrule
\end{tabular}
\caption{Per-tier scaling correlation on the $14$ dense models.
Sign preserved in $4/5$ tiers; T2 individually significant. The
addendum's prediction (T4--T5 should drive the headline) is
\emph{not} borne out --- the strongest individual tier is T2.}
\label{tab:per_tier}
\end{table}

\section{\texorpdfstring{S5 --- $m_{\min}$ sensitivity (judge baseline)}{S5 --- m-min sensitivity (judge baseline)}}\label{app:s5}

\Cref{tab:s5} reports each dense model's \sdi{} under three
$m_{\min}$ strategies: (a) our default model-specific KS-selector,
(b) fixed $m_{\min} = 1.5$, and (c) the Clauset-style criterion of
$\geq 100$ exceedances (which selects $m_{\min} = 0.5$ for every
model in the catalog). The dense Spearman correlation between
$\log_{10}(\text{active params})$ and \sdi{} is $-0.562$ under (a),
$+0.837$ under (b), and $+0.793$ under (c). The interpretation of
this sign flip is in \S\ref{sec:sensitivity}.

\begin{table}[h]
\centering
\small
\begin{tabular}{l r r r r}
\toprule
model & \sdi{}$_{\text{def}}$ & \sdi{}$_{m=1.5}$ & \sdi{}$_{m=0.5}$ & $n_{\geq 0.5}$ \\
\midrule
\textsc{llama-3.2-3b}            & 1.046 & 0.469 & 0.290 & 1750 \\
\textsc{phi-3.5-mini}            & 1.309 & 0.530 & 0.298 & 1126 \\
\textsc{gemma-3-4b}              & 0.979 & 0.439 & 0.243 & 1497 \\
\textsc{qwen2.5-7b}              & 1.257 & 0.517 & 0.299 & 1247 \\
\textsc{llama-3.1-8b}            & 1.001 & 0.737 & 0.566 & 2429 \\
\textsc{eurollm-9b}              & 1.067 & 0.526 & 0.297 & 1356 \\
\textsc{solar-10.7b}             & 0.905 & 0.711 & 0.562 & 2571 \\
\textsc{gemma-3-12b}             & 0.938 & 0.742 & 0.555 & 2521 \\
\textsc{ministral-14b}           & 1.122 & 0.795 & 0.534 & 2340 \\
\textsc{mistral-medium-3}        & 0.906 & 0.767 & 0.434 & 2334 \\
\textsc{mistral-small-24b}       & 0.999 & 0.834 & 0.639 & 2265 \\
\textsc{gemma-2-27b}             & 0.619 & 0.786 & 0.619 & 2665 \\
\textsc{gemma-3-27b}             & 0.956 & 0.760 & 0.610 & 2500 \\
\textsc{seed-oss-36b}            & 0.574 & 0.781 & 0.574 & 2260 \\
\midrule
dense Spearman vs.\ $\log_{10}(\text{params})$
                                 & $-0.562$ & $+0.837$ & $+0.793$ &  \\
$p$-value                        & $0.006$  & $0.0002$ & $0.0007$ &  \\
\bottomrule
\end{tabular}
\caption{S5: dense \sdi{} under three $m_{\min}$ strategies. The
default model-specific KS-selector targets the upper tail; fixed
$m_{\min} = 1.5$ and the Clauset-style $\geq 100$-exceedances rule
both target the bulk of the positive distribution. The sign of the
scaling correlation flips between the two regimes: larger dense
models have heavier upper tails but lighter bulk decay
rates.}\label{tab:s5}
\end{table}

\section{Sensitivity figures}\label{app:sensitivity}

\begin{figure}[p]
\centering
\includegraphics[width=0.92\linewidth,height=0.72\textheight,keepaspectratio]{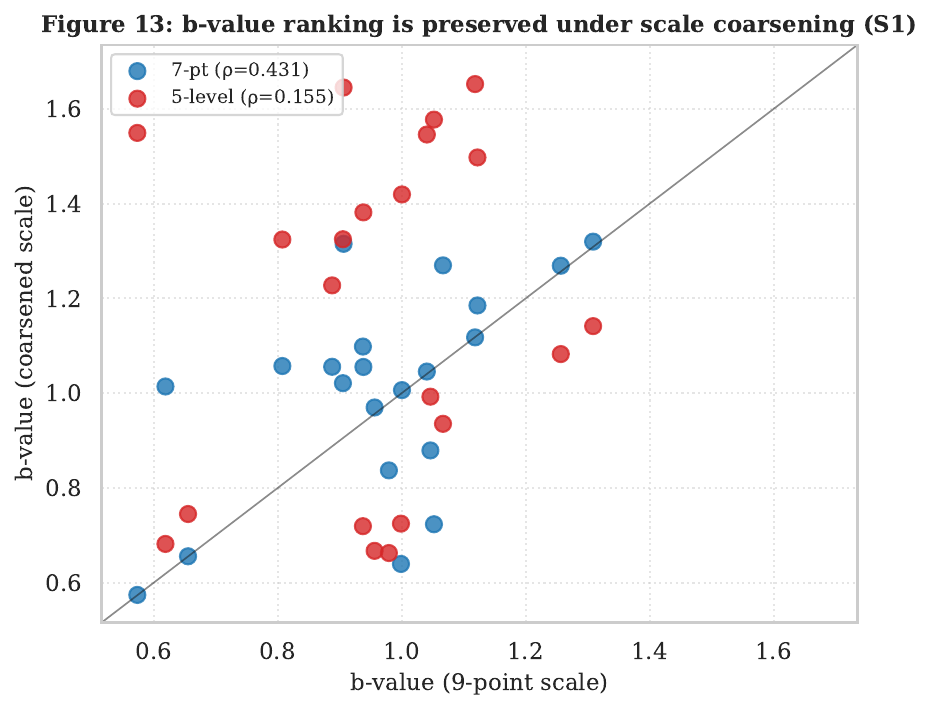}
\caption{S1: \sdi{} under scale coarsening. The 9-point grid is
load-bearing; both coarsenings fall well below the $\rho > 0.85$
stability threshold.}\label{fig:fig13}
\end{figure}

\begin{figure}[p]
\centering
\includegraphics[width=0.92\linewidth,height=0.72\textheight,keepaspectratio]{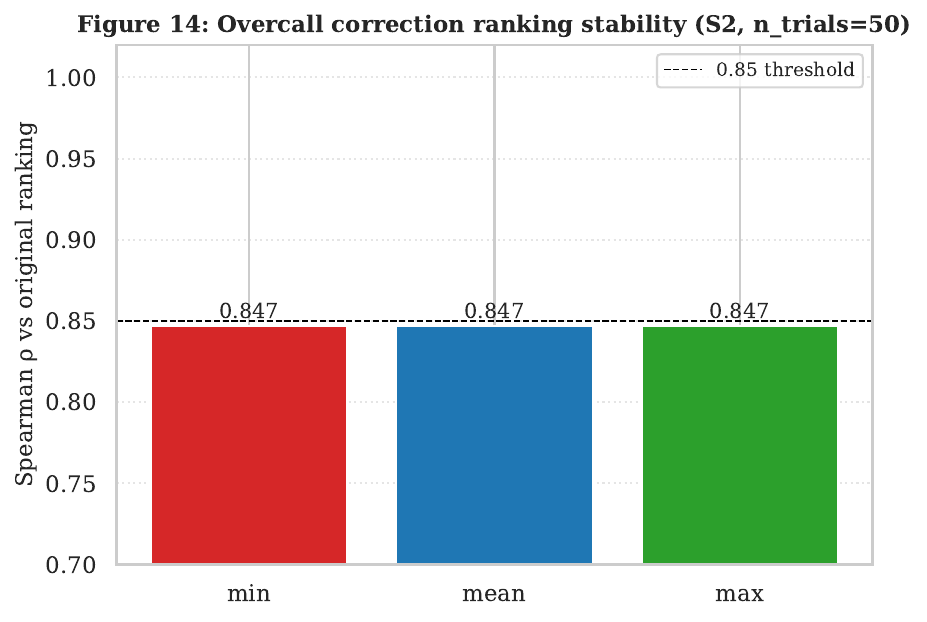}
\caption{S2: ranking stability under overcall correction. Mean
Spearman $\rho = 0.847$ across $50$ bootstrap trials, narrowly
below the pre-registered $0.85$ threshold.}\label{fig:fig14}
\end{figure}

\begin{figure}[p]
\centering
\includegraphics[width=\linewidth,height=0.82\textheight,keepaspectratio]{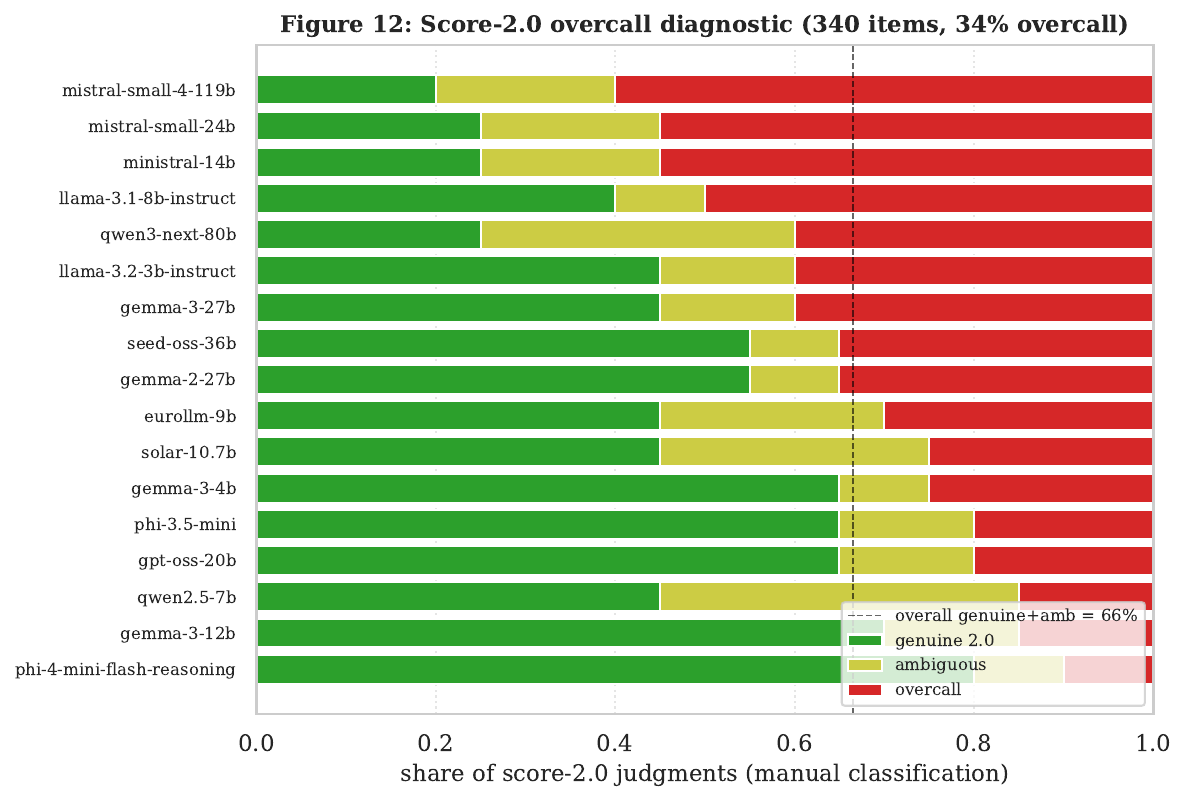}
\caption{Per-model overcall breakdown on the $340$-item
human-rated subset (Appendix~\ref{app:overcall}).}\label{fig:fig12}
\end{figure}

\section*{NeurIPS Paper Checklist}

\begin{enumerate}

\item {\bf Claims}
    \item[] Question: Do the main claims made in the abstract and introduction accurately reflect the paper's contributions and scope?
    \item[] Answer: \answerYes{}
    \item[] Justification: The abstract and introduction summarize the matched-accuracy discriminator result, the theory contribution, the human-validation study, and the scope limits on model coverage. Limitations and scope restrictions are also stated explicitly in the introduction and the Limitations section.

\item {\bf Limitations}
    \item[] Question: Does the paper discuss the limitations of the work performed by the authors?
    \item[] Answer: \answerYes{}
    \item[] Justification: Section~\ref{sec:limitations} discusses judge overcalling, the load-bearing nature of the 9-point scale, model-coverage gaps, limited human-sample size, failed predictive calibration, benchmark misuse, and the fact that queries are LLM-generated.

\item {\bf Theory assumptions and proofs}
    \item[] Question: For each theoretical result, does the paper provide the full set of assumptions and a complete (and correct) proof?
    \item[] Answer: \answerYes{}
    \item[] Justification: Theorem~\ref{thm:nonreduce} and Proposition~\ref{prop:resolution} are stated in Section~\ref{sec:theory}, and complete proofs are provided in Appendix~\ref{app:proofs}. The main text includes proof sketches for intuition and the appendix provides the formal derivations.

\item {\bf Experimental result reproducibility}
    \item[] Question: Does the paper fully disclose all the information needed to reproduce the main experimental results of the paper to the extent that it affects the main claims and/or conclusions of the paper (regardless of whether the code and data are provided or not)?
    \item[] Answer: \answerYes{}
    \item[] Justification: Sections~\ref{sec:method} and \ref{sec:setup}, together with the appendices and released artifact, describe the benchmark construction, model catalog, scoring pipeline, fitting procedure, and statistical analyses used for the main claims. The released artifact also includes scripts and a reproduction guide for rerunning the analyses from saved data.

\item {\bf Open access to data and code}
    \item[] Question: Does the paper provide open access to the data and code, with sufficient instructions to faithfully reproduce the main experimental results, as described in supplemental material?
    \item[] Answer: \answerYes{}
    \item[] Justification: The released artifact includes the code, saved analysis outputs, released data files, Croissant metadata, and reproduction instructions. The paper states that the benchmark, scoring toolkit, human-validation data, and analysis code are released for replication.

\item {\bf Experimental setting/details}
    \item[] Question: Does the paper specify all the training and test details (e.g., data splits, hyperparameters, how they were chosen, type of optimizer) necessary to understand the results?
    \item[] Answer: \answerYes{}
    \item[] Justification: The paper specifies the 10,000-query benchmark design, domain/tier stratification, model catalog, decoding setup, severity scale, dual-judge resolution pipeline, human-validation protocol, candidate distribution families, and bootstrap procedures. Additional tables and prompts are included in the appendix and supplemental files.

\item {\bf Experiment statistical significance}
    \item[] Question: Does the paper report error bars suitably and correctly defined or other appropriate information about the statistical significance of the experiments?
    \item[] Answer: \answerYes{}
    \item[] Justification: The paper reports bootstrap confidence intervals for \sdi{}, p-values for correlation and goodness-of-fit tests, mutual-information decomposition, and additional robustness analyses. The bootstrap setup and significance thresholds are described in the paper and supplemental scripts.

\item {\bf Experiments compute resources}
    \item[] Question: For each experiment, does the paper provide sufficient information on the computer resources (type of compute workers, memory, time of execution) needed to reproduce the experiments?
    \item[] Answer: \answerNo{}
    \item[] Justification: The paper states that inference was run through an open-access third-party API hosting the target models, but it does not provide exact GPU types, memory, or wall-clock runtime for every experiment. We chose not to over-claim compute reproducibility where the provider abstraction hides the underlying hardware.

\item {\bf Code of ethics}
    \item[] Question: Does the research conducted in the paper conform, in every respect, with the NeurIPS Code of Ethics \url{https://neurips.cc/public/EthicsGuidelines}?
    \item[] Answer: \answerYes{}
    \item[] Justification: The paper discusses misuse risks and limitations, and the released artifact is intended for auditing and evaluation rather than unsafe deployment. We are not aware of any aspect of the work that conflicts with the NeurIPS Code of Ethics.

\item {\bf Broader impacts}
    \item[] Question: Does the paper discuss both potential positive societal impacts and negative societal impacts of the work performed?
    \item[] Answer: \answerYes{}
    \item[] Justification: Section~\ref{sec:limitations} includes a Broader impacts paragraph discussing positive impacts for risk auditing and deployment gating, as well as negative impacts from benchmark gaming or optimizing for better-looking but still dangerous failure profiles.

\item {\bf Safeguards}
    \item[] Question: Does the paper describe safeguards that have been put in place for responsible release of data or models that have a high risk for misuse (e.g., pre-trained language models, image generators, or scraped datasets)?
    \item[] Answer: \answerNA{}
    \item[] Justification: The paper releases an evaluation benchmark, scores, and analysis code, not a new generative model or scraped multimedia dataset with elevated release risk. We therefore view the high-risk-model safeguard question as not directly applicable to this artifact.

\item {\bf Licenses for existing assets}
    \item[] Question: Are the creators or original owners of assets (e.g., code, data, models), used in the paper, properly credited and are the license and terms of use explicitly mentioned and properly respected?
    \item[] Answer: \answerNo{}
    \item[] Justification: The paper cites the upstream benchmarks and model families used in the evaluation, and the released artifact includes licenses for the released ERRORQUAKE code/data. However, the paper does not enumerate every upstream model or provider license directly in the manuscript, so we answer conservatively.

\item {\bf New assets}
    \item[] Question: Are new assets introduced in the paper well documented and is the documentation provided alongside the assets?
    \item[] Answer: \answerYes{}
    \item[] Justification: The benchmark release includes a datasheet, Croissant metadata, reproduction instructions, and scripts for the reported analyses. The paper and supplemental material document the query structure, severity scale, scoring fields, and released outputs.

\item {\bf Crowdsourcing and research with human subjects}
    \item[] Question: For crowdsourcing experiments and research with human subjects, does the paper include the full text of instructions given to participants and screenshots, if applicable, as well as details about compensation (if any)? 
    \item[] Answer: \answerNo{}
    \item[] Justification: The released artifact includes the full rating instructions used in the human-validation study. However, the paper does not currently document participant compensation or an equivalent statement that compensation was not applicable.

\item {\bf Institutional review board (IRB) approvals or equivalent for research with human subjects}
    \item[] Question: Does the paper describe potential risks incurred by study participants, whether such risks were disclosed to the subjects, and whether Institutional Review Board (IRB) approvals (or an equivalent approval/review based on the requirements of your country or institution) were obtained?
    \item[] Answer: \answerNo{}
    \item[] Justification: The paper reports a human-rating study but does not include an IRB-approval statement or a fuller discussion of participant-risk disclosure. We therefore answer conservatively rather than implying review documentation that is not present in the released materials.

\item {\bf Declaration of LLM usage}
    \item[] Question: Does the paper describe the usage of LLMs if it is an important, original, or non-standard component of the core methods in this research? Note that if the LLM is used only for writing, editing, or formatting purposes and does \emph{not} impact the core methodology, scientific rigor, or originality of the research, declaration is not required.
    \item[] Answer: \answerYes{}
    \item[] Justification: The paper explicitly states that the benchmark queries were generated by a frontier LLM and that response scoring uses an LLM-judge pipeline with human validation. These components are described in Sections~\ref{sec:method}, \ref{sec:setup}, and \ref{sec:limitations}.

\end{enumerate}

\end{document}